\theoremstyle{definition}
\newtheorem{proposition}{Proposition}
\newtheorem{lemma}{Lemma}
\newtheorem{definition}{Definition}
\newtheorem{example}{Example}
\newtheorem{remark}{Remark}
\newtheorem{corollary}{Corollary}
\newcommand{\transp}[1]{{#1}^{\top}}
\newcommand{\ud}{\mathrm{d}}
\title{ \bf
Online greedy identification of\\linear dynamical systems
}
\author{Matthieu Blanke
\thanks{INRIA, DI/ENS, PSL Research University, Paris, France.
}%
\and Marc Lelarge%
\footnotemark[1]
}
\begin{document}

\maketitle

\begin{abstract}
    This work addresses the problem of exploration in an unknown environment. For linear dynamical systems, we use an experimental design framework and introduce an online greedy policy where the control maximizes the information of the next step. In a setting with a limited number of experimental trials, our algorithm has low complexity and shows experimentally competitive performances compared to more elaborate gradient-based methods.
    \footnote{Our code is available at \url{https://github.com/MB-29/greedy-identification}}
\end{abstract}

\section{Introduction}
\label{section:introduction}

System identification is a problem of great interest in many fields such as econometrics, robotics, aeronautics, mechanical engineering or reinforcement learning \cite{ljung1998system, NATKE19921069, goodwin1977dynamic, gupta1976application, Moerland2021}. The task consists in estimating the parameters of an unknown system by sampling trajectories from it as fast as possible. To this end, inputs must be chosen so as to yield maximally informative trajectories.
We focus on linear time-invariant~(LTI) systems. Let $A \in \mathbb{R}^{d \times d}$ and~${B\in \mathbb{R}^{d \times m}}$ be two matrices; we consider the following discrete-time dynamics:
\begin{equation}
    \begin{aligned}
        x_0     & =0,
        \\
        x_{t+1} & = A x_t + B u_t + w_t, \quad 0 \leq t \leq T-1
    \end{aligned}
    \label{eq:controlled_dynamics}
\end{equation}
where $x_t \in \mathbb{R}^{d}$ is the state, $w_t \sim \mathcal{N}(0, \sigma^2 I_d)$ is a normally distributed isotropic noise with known variance ~$\sigma^2$ and the control variables~$u_t \in \mathbb{R}^m$ are chosen by the controller with the following power constraint:
\begin{equation}
    \frac{1}{T} \sum\limits_{t=0}^{T-1} \left\lVert u_t \right\rVert^2 \leq \gamma^2.
    \label{eq:power_constraint}
\end{equation}
The system parameters  $(A \, B) := \theta \in \mathbb{R}^{d\times q}$ ($q=d+m$) are unknown initially and are to be estimated from observed trajectories~$(x_t)$. 
The goal of system identification is to choose the inputs $u_t$ so as to drive the system to the most informative states for the estimation of~$\theta$. It may happen that the controller knows~$B$, in which case $\theta = A$ and $q=m$.
\par
System identification is a primary field in control theory. It has been widely studied  in the field of optimal design of experiments \cite{fedorov1972theory,980881}. For LTI dynamic systems, classical optimal design approaches provided results for single-input single-output (SISO) systems \cite{goodwin1977dynamic, 10.2307/1268065} or multi-input multi-output (MIMO) systems in the frequency domain or with randomized time-domain inputs \cite{MEHRA1976211}.
More recently, system identification received considerable attention in the machine learning community, with the aim of obtaining finite-time bounds on the estimation error for~$A$ \cite{jedra2020finite, jedra2020finitetime, Simchowitz2018}. In \cite{Wagenmaker2020} and \cite{wagenmaker2021taskoptimal}, the inputs are optimized in the frequency domain to maximize an optimal design objective, with theoretical estimation rate guarantees.
In our approach, we directly optimize deterministic inputs in the time domain for MIMO LTI systems.
An important aspect of system identification is the quantity of computational resource and the number of observations needed to reach a certain performance.
We study the computational complexity of our algorithms and compare their performance against each other and against an oracle, both on average and on real-life dynamic systems.

\subsection{Notations}
\begin{sloppypar}
    In the rest of this work, we note $\theta_{\star}=(A_{\star} B_{\star})$ the unknown parameter underlying the dynamics. We suppose that the pair $(A_{\star}, B_{\star})$ is controllable: the matrix~${R_{\star}= (B_{\star} \, A_{\star}B_{\star}\,  \dots \, A_{\star}^{d-1}B_{\star})}$ has rank~$d$. Adopting the notations of \cite{wagenmaker2021taskoptimal}, we define a policy $\pi : (x_{1:t},u_{0:t-1})\rightarrow u_t$ as a mapping from the past trajectory to future input. The set of policies meeting the power constraint~\eqref{eq:power_constraint} is noted $\Pi_{\gamma}$. We note ${\tau = (x_{1:T},u_{0:T-1})}$ a trajectory, and we extend this notation to $\tau(\pi, T)$ when the trajectory is obtained using a policy $\pi$ up to time $T$. We denote by $\mathbb{E}_\theta$ the average for a dynamical system given by \eqref{eq:controlled_dynamics} (where the randomness comes from the noise $w_t$ and possibly from the policy inducing the control $u_t$).
\end{sloppypar}

\subsection{Adaptive identification}
Fix an estimator $\hat{\theta}: \tau \mapsto \hat{\theta} (\tau) \in \mathbb{R}^{d\times q}$, yielding an estimate of the parameters from a given trajectory. Our objective is to play the inputs $u_t$ of a policy $\pi \in \Pi_{\gamma}$ so that the resulting trajectory $\tau$ gives a good estimation $\hat{\theta}(\tau)$ for~$\theta_{\star}$.  We measure this performance by the mean squared error:
\begin{equation}
    \label{eq:MSE}
    \mathrm{MSE}(\pi) = \frac{1}{2} \mathbb{E}_{\theta_\star} \left[ \left\lVert \hat{\theta}\big(\tau(\pi, T)\big) - \theta_{\star} \right\rVert_{\mathrm{F}}^2 \right].
\end{equation}
Of course, this quantity depends on $\theta_{\star}$ the true parameter of the system which is unknown. A natural way of proceeding is to estimate $\theta_{\star}$ sequentially, as follows.

\begin{definition}[Adaptive system identification]
    Given an estimate $\hat{\theta}_i$ of $\theta_{\star}$, the policy for the next sequence of inputs can be chosen so as to minimize a cost function $F$ approximating the MSE \eqref{eq:MSE}, using $\hat{\theta}_i$ as an approximation of $\theta_{\star}$. Then, these inputs are played and $\theta_{\star}$ is re-estimated with the resulting trajectory, and so on. We call planning the process of minimizing $F$.
\end{definition}
This approach is summarized in Algorithm \ref{algorithm:sequential-identification}, which takes as inputs a first guess for the parameters to estimate $\theta_0$ and a policy $\pi_0$, the problem parameters $\sigma$ and $\gamma$, a schedule~${\{t_0, =0, t_1, \dots, t_{n-1}, t_n=T \}}$, a cost functional~$F$ and an estimator $\hat{\theta}$.
\begin{algorithm}
    \caption{Sequential system identification}
    \label{algorithm:sequential-identification}
    \begin{algorithmic}
        \State \textbf{inputs} initial guess $\theta_0$, $\pi_0$, noise variance $\sigma^2$, power $\gamma^2$, cost functional $F$, estimator $\hat{\theta}$
        \State \textbf{output} final estimate $\theta_T$
        \For{$0 \leq i \leq n-1$}
        \State  run the true system $t_{i+1} - t_i$ steps
        \State \; with inputs $u_t = \pi_i(x_{1:t}, u_{1:t-1})$
        \State $\theta_i  = \hat{\theta}(x_{1:t_i}, u_{1:t_i-1})$ \Comment{estimation}
        \State  $\pi_i$ solves $\underset{\pi \in \Pi_{\gamma}}{\min} \, F(\pi ; \theta_i, t_{i+1})$ \Comment{planning}
        \EndFor
    \end{algorithmic}
\end{algorithm}
An adaptive identification algorithm is hence determined by a triplet $(\hat{\theta}, F, \{ t_i \})$.
A natural estimator is the least squares estimator $\hat{\theta} = \hat{\theta}_{\mathrm{LS}}$ which we define in Section~\ref{section:OLS}. In the rest of this work, we set $\hat{\theta} = \hat{\theta}_{\mathrm{LS}}$. 
\begin{example}[Random policy]
    \label{example:random}
    A naive strategy for system identification consists in playing random inputs with maximal energy at each time step. This corresponds to the choice~$t_i = i$ and $\pi_i$ returning $u_t \sim \mathcal{N}(0, \gamma^2/m)$.
\end{example}
\begin{example}[Task-optimal pure exploration]
    \label{exemple:tople}
    In \cite{wagenmaker2021taskoptimal}, the authors propose the following cost function
    \begin{equation}
        \label{eq:tople_cost} 
        F(\pi ; \theta, t) = \mathrm{tr} \left[ \left( \Gamma_t\bigl( \tau(\pi) ; \theta \bigr) \right)^{-1} \right],
    \end{equation}
    where $\Gamma_t$ is defined in equation \eqref{eq:gramians} below.
    As we will see in Section~\ref{section:OD}, this corresponds to A-optimal experimental design. The authors show that this cost function approximates the MSE in the long time limit at an optimal rate when $T \rightarrow + \infty$. In their identification algorithm, they set $t_i = 2^i \times T_0$ for some initial epoch $T_0$.
\end{example}

\begin{example}[Oracle]
    \label{example:oracle} An oracle is a controller who is assumed to choose their policy with the knowledge of the true parameter $\theta_{\star}$. It can hence perform one single, offline optimization of $F(\pi; \theta, T) = \mathrm{MSE}(\pi)$ over~$\{ t_i \} = \{0, T \}$. By definition, the inputs played by the oracle are the optimal inputs for our problem of mean squared error system identification.
\end{example}

\subsection{Contributions}

In practice, systems have complex dynamics and can only be approximated locally by linear systems. We still believe that in order to understand complex systems, we need to understand identification of linear systems as on short time scales, we can approximate the complex system with a linear one. In order to be practical, our identification algorithm needs to interact as little as possible with the true system and to take decisions as fast as possible. With previous notations, we are interested in cases where $T$ is small (to ensure that in practice the dynamics remains time-invariant and linear) and where the estimation and planning steps need to be very fast in order to run the algorithm online.

In this work, we explore a setting for linear system identification with hard constraints on the number of interactions with the real system and on the computing resources used for planning and estimation. To the best of our knowledge, finite-time system identification guarantees are only available in the large $T$ limit which makes the hypothesis of linear dynamic quite unlikely. Using a framework based on experimental design, we propose a greedy online algorithm requiring minimal computing resources. The resulting policy gives a control that maximizes the amount of information collected at the next step. We show empirically that for short interactions with the system, this simple approach can actually outperforms more sophisticated gradient-based methods.
 We also propose a method to compute an oracle optimal control, against which we can compare the different identification algorithms.

\subsection{Related work}
System identification has been studied extensively in the last decades \cite{gevers:hal-00765659,ljung1998system}.
The question of choosing the maximally informative input can be tackled in the framework of classical experimental design \cite{fedorov1972theory, pukelsheim2006optimal}. Several methods have been proposed for the particular case of dynamic systems \cite{1100554,10.2307/1268065, MEHRA1976211} A comprehensive study can be found in \cite{goodwin1977dynamic}, with a focus on SISO systems.
\par
In the machine learning community, the last few years have seen an increasing interest in finite-time system identification \cite{Simchowitz2018,jedra2020finitetime, sarkar2019near,tsiamis2019finite}. These works typically derive theoretical error rates for linear dynamic system estimation and produce high probability bounds guaranteeing that the estimation is smaller than~$\varepsilon$~with probability greater than~$1- \delta$ after a certain number of samples. The question of designing optimal inputs is tackled in \cite{Wagenmaker2020,wagenmaker2021taskoptimal}. The authors derive an asymptotically optimal algorithm by computing the control in the frequency domain. In \cite{Mania2020}, an approach to control partially nonlinear systems is proposed.

\section{Background}

It is convenient to describe the structure of the state as a function of the inputs and the noise. By integrating the dynamics \eqref{eq:controlled_dynamics}, we obtain the following result.
\begin{proposition}
    \label{proposition:linear}
    The state can be expressed as
    $x_t = \bar{x}_t + \tilde{x}_t$ with
    \begin{equation}
        \label{eq:x_linear}
        \bar{x}_t =
        {\sum\limits_{s=0}^{t-1}A^{t-1-s}Bu_s},
        \quad
        \tilde{x}_t= {\sum\limits_{s=0}^{t-1}A^{t-1-s}w_s}.
    \end{equation}
\end{proposition}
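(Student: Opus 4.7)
The plan is to prove the identity by a straightforward induction on $t$, using the recursion given by \eqref{eq:controlled_dynamics} and the initial condition $x_0 = 0$. Set $y_t := \sum_{s=0}^{t-1} A^{t-1-s}(Bu_s + w_s)$; once I show $x_t = y_t$, the splitting $x_t = \bar{x}_t + \tilde{x}_t$ follows by linearity of the sum, isolating the terms in $u_s$ from those in $w_s$.

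For the base case $t = 0$, the sum defining $y_0$ is empty, giving $y_0 = 0 = x_0$, which matches both $\bar{x}_0$ and $\tilde{x}_0$ (both empty sums). For the inductive step, assume $x_t = y_t$. Using the dynamics,
\begin{equation*}
x_{t+1} = A x_t + B u_t + w_t = \sum_{s=0}^{t-1} A^{t-s}(Bu_s + w_s) + (Bu_t + w_t),
\end{equation*}
and the last term is exactly the $s = t$ contribution with exponent $A^{t-t} = I$, so
\begin{equation*}
x_{t+1} = \sum_{s=0}^{t} A^{t-s}(Bu_s + w_s) = y_{t+1},
\end{equation*}
closing the induction.

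Finally, splitting the sum into its two independent parts gives $x_t = \bar{x}_t + \tilde{x}_t$ with the claimed expressions. There is no real obstacle here: the only thing to be careful about is the bookkeeping of indices (the upper limit $t-1$ in the sum and the matching exponent $A^{t-1-s}$), which is exactly what the inductive computation verifies. This decomposition will be useful later because $\bar{x}_t$ is a deterministic function of the controls whereas $\tilde{x}_t$ carries the full stochasticity of the trajectory, a separation that is natural for the information-theoretic analysis that follows.
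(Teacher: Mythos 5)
Your induction is correct and is exactly the argument the paper compresses into the phrase ``by integrating the dynamics'': unrolling the recursion $x_{t+1}=Ax_t+Bu_t+w_t$ from $x_0=0$ and then splitting the resulting sum by linearity into the control part $\bar{x}_t$ and the noise part $\tilde{x}_t$. No issues.
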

\par \medskip
Note that that~$\bar{x}_t = \mathbb{E}_\theta[x_t]$ solves the deterministic dynamics~$\bar{x}_{t+1} = A\bar{x}_t + B u_t$
and $\tilde{x}_t$ has zero mean and is independent of the control. The two terms $\bar{x}_t$ and $\tilde{x}_t$ depend linearly on the $Bu_s$ and the $w_s$ respectively.
\par \medskip
The data-generating distribution knowing the parameter~$\theta$ can be computed using the probability chain rule with the dynamics \eqref{eq:controlled_dynamics}:

\begin{equation}
    p(\tau|\theta) = \frac{1}{\sqrt{2 \pi \sigma^2}}\exp \left[  - \frac{1}{2\sigma^2} \sum\limits_{t=0}^{T-1}
        \left\lVert x_{t+1} - A x_t - B u_t \right\rVert^2_2 \right].
\end{equation}
We define the log-likelihood (up to a constant):
\begin{equation}
    \label{eq:log-likelihood}
    \begin{aligned}
        \ell(\tau, \theta) & = - \frac{1}{2\sigma^2} \sum\limits_{t=0}^{T-1}
        \left\lVert x_{t+1} - A x_t - B u_t \right\rVert^2_2
        \\
                           & = -\frac{1}{2 \sigma^2} \Vert Y - Z \transp{\theta} \Vert^2_{\mathrm{F}},
    \end{aligned}
\end{equation}
where we have noted
${Y = \transp{(y_0 \, \dots \, y_{T-1})}} \in \mathbb{R}^{T \times d}$
and
${Z = \transp{(z_0 \, \dots \, z_{T-1}}) \in \mathbb{R}^{T \times q}}$
the observations and the covariates associated to the parameter $\theta$.
If $\theta = (A \, B)$, then $y_t = x_{t+1}$,
$z_t =
    \begin{pmatrix}
        x_t \\ u_t
    \end{pmatrix}$.
If $\theta = A$, then $y_t = x_{t+1} - B u_t$ and~$z_t = x_t$. We also note ${U = \transp{(u_0 \, \dots \, u_{T-1}}) \in \mathbb{R}^{T \times m}}$ the input matrix and ${X = \transp{(x_0 \, \dots \, x_{T-1}}) \in \mathbb{R}^{T \times d}}$ the state matrix.
We define the moment matrix~$M_t = \sum\limits_{s=0}^{t} z_t \transp{z_t}$
and the Gramians of the system at time $t$:
\begin{equation}
    \label{eq:gramians}
    \Gamma_t( \tau ; \theta ) = \frac{1}{t}\mathbb{E}_{\theta}\left[M_{t-1}
        \right]
\end{equation}
and $G_t(A) = \sum\limits_{s=0}^{t-1} A^s \transp{A^s}$.
Note that $ \transp{Z}Z=M_{T}$


\subsection{Ordinary least squares}
\label{section:OLS}

Given a trajectory, a natural estimator for the matrix $A_{\star}$ is the least squares estimator. The theory of least squares provides us with a formula for the mean squared error with respect to the ground truth, which can be used as a measure of the quality of a control.
\begin{proposition}[Ordinary least squares estimator]
    \label{proposition:OLS}
    \begin{sloppypar}
        Given inputs $U$ and noise $W$, the ordinary least squares~(OLS) estimator associated to the resulting trajectory $X$ is
    \end{sloppypar}
    \begin{equation}
        {\hat{\theta}(\tau) = \big((\transp{Z}Z)^{-1}\transp{Z}Y}\big)\transp{}.
        \label{eq:OLS_estimator}
    \end{equation}
    and its difference to $\theta_{\star}$ is given by

    \begin{equation}
        \begin{aligned}
            \transp{\big(\hat{\theta}(\tau) - \theta_{\star} \big)} & = (\transp{Z}Z)^{-1}\transp{Z}W
            \\ &= Z^+W,
            \label{eq:OLS_difference}
        \end{aligned}
    \end{equation}
    where $Z^+$ denotes the pseudo-inverse of $Z$.
    Noting $\theta_t$ the least squares estimator obtained from the trajectory up to time $t$, we recall the recursive update formula
    \begin{equation}
        \label{eq:OLS_update}
        \transp{{\theta}_{t+1}} = M_{t+1}^{-1}\big(M_t {\theta_t} + z_t \transp{y_t}\big).
    \end{equation}
\end{proposition}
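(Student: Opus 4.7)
The plan is to derive all three claims directly from the log-likelihood expression \eqref{eq:log-likelihood} together with the observation, immediate from \eqref{eq:controlled_dynamics}, that the data satisfy $Y = Z \transp{\theta_\star} + W$, where $W = \transp{(w_0 \ \dots \ w_{T-1})}$ stacks the noise vectors as rows. This linear model holds in both cases covered by the paper: when $\theta=(A\,B)$ with $z_t = \transp{(\transp{x_t}\ \transp{u_t})}$ and $y_t = x_{t+1}$, and when $\theta=A$ with $z_t=x_t$ and $y_t=x_{t+1}-Bu_t$. Once this identity is written down, everything reduces to linear algebra; no probabilistic input is needed.

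For \eqref{eq:OLS_estimator}, I would differentiate $\theta \mapsto \|Y - Z\transp{\theta}\|_{\mathrm{F}}^2$ in $\transp{\theta}$, or equivalently invoke the normal equation for multi-output linear regression. Setting the gradient to zero gives $\transp{Z}Z\transp{\theta} = \transp{Z}Y$, and inverting $\transp{Z}Z = M_T$ yields the stated closed form after one transposition. Substituting $Y = Z\transp{\theta_\star} + W$ into this closed form immediately produces $\transp{\hat\theta} - \transp{\theta_\star} = (\transp{Z}Z)^{-1}\transp{Z}W$; the second equality in \eqref{eq:OLS_difference} is simply the definition of the Moore--Penrose pseudo-inverse when $Z$ has full column rank. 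For the recursion \eqref{eq:OLS_update}, the key observation is that both ingredients of the closed form accumulate additively: $M_{t+1} = M_t + z_t\transp{z_t}$ and, writing $Z_t,Y_t$ for the matrices truncated at time $t$, $\transp{Z_{t+1}}Y_{t+1} = \transp{Z_t}Y_t + z_t\transp{y_t}$. Combining this with $\transp{Z_t}Y_t = M_t\transp{\theta_t}$ (the closed form at time $t$) and left-multiplying by $M_{t+1}^{-1}$ gives the identity at once.

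The only step that requires any care is the invertibility of $M_T$ (and of every $M_t$ entering the recursion), which is the usual persistence-of-excitation hypothesis implicit in writing \eqref{eq:OLS_estimator}. I do not expect a real obstacle: the proposition is a restatement of well-known OLS identities, and the only bookkeeping is to keep the transposes and shapes consistent with the paper's convention that $y_t$ and $z_t$ are stored as rows of $Y$ and $Z$.
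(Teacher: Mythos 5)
Your proposal is correct and follows essentially the same route as the paper: both reduce to the normal equations $\transp{Z}Z\transp{\theta}=\transp{Z}Y$ for the multi-output regression (the paper by splitting into $d$ independent column-wise regressions, you by differentiating the matrix objective directly — the same computation), then substitute $Y=Z\transp{\theta_\star}+W$, and both flag invertibility of $\transp{Z}Z$ (the paper attributes it to controllability). The only difference is that you actually verify the recursive update \eqref{eq:OLS_update} via the additive accumulation of $M_t$ and $\transp{Z_t}Y_t$, whereas the paper merely recalls it without proof; your argument for that step is correct.
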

\begin{proof}
    The least squares estimator minimizes the quadratic loss $\frac{1}{2} \sum\limits_{t=0}^{T-1}\Vert x_{t+1} - Ax_t - Bu_t\Vert_2^2$, which writes
    \begin{equation}
        \label{eq:matrix_ls}
        \frac{1}{2}\left\lVert Y - Z \transp{\theta} \right\rVert^2_{\mathrm{F}} = \frac{1}{2}\sum\limits_{j=1}^d \left\lVert Y_j - Z \theta_j \right\rVert^2_2
    \end{equation}
    with $Y_j$ the $j$-th column of $Y$ and $\theta_j$ the $j$-th row of~$\theta$. The $d$ terms of the sum can be minimized independently, with each $\theta_j$ minimizing the least squares of the vectorial relation~$Y_j = Z \beta$. The solution for $\theta_j$ is equal to $\hat{\theta}_j = (\transp{Z}Z)^{-1}\transp{Z}Y_j$ (see \textit{e.g.} \cite{boyd2018introduction}). By concatenating the columns, we obtain that ${\hat{\theta}\transp{} = (\transp{Z}Z)^{-1}\transp{Z}Y}$, which proves~\eqref{eq:OLS_estimator}. Substituting~${Y = Z \transp{\theta_{\star}} + W}$ yields \eqref{eq:OLS_difference}.  Note here that the controllability assumption on $(A_{\star}, B_{\star})$ ensures that $Z$ can be made full rank, and hence that the moment matrix $\transp{Z}Z$ is invertible.
\end{proof}
\begin{definition}[OLS mean squared error]
    \label{definition:ols_mse}
    For a given trajectory $\tau$ generated with a matrix $A_{\star}$ and noise $W$, the Euclidean mean squared error (MSE) is
    \begin{equation}
        \label{eq:ols_error}
        \begin{aligned}
            \Vert \hat{\theta}_{\mathrm{LS}} - \theta_{\star} \Vert_{\mathrm{F}}^2 & = \big\lVert \big((\transp{Z}Z)^{-1}\transp{Z}W\big)\transp{}\big\rVert_2^2
            \\
                                                                                   & = \mathrm{tr} \left[
                Z (\transp{Z}Z)^{-2}\transp{Z}W \transp{W}
                \right].
        \end{aligned}
    \end{equation}
\end{definition}
If the noise $W$ and the covariates $Z$ were independent, then the expected error would reduce to the A-optimal design objective $\mathbb{E}[\mathrm{tr}(\transp{Z}Z)^{-1}]$. It is not the case in our framework since $Z$ is generated with $W$.

\subsection{Classical optimal design}
\label{section:OD}
The correlation between $Z$ and $W$ makes the derivation of a tractable expression for the expectation of \eqref{eq:ols_error} complicated. In this section, we show how a more tractable objective can be computed by applying theory of optimal experimental design \cite{fedorov1972theory,steinberg1984experimental}. In the classical theory of optimal design, the informativeness of an experiment is measured by the size of the expected Fisher information.
\begin{definition}[Fisher information matrix]
    Let $\ell(\tau, \theta) = \log p(\tau|\theta)$ denote the log-likelihood of the data-generating distribution knowing the parameter $\theta$. The Fisher information matrix is defined as
    \begin{equation}
        I(\theta)= -\mathbb{E}_{\theta}\left[\frac{\partial^2 \ell (\tau, \theta)}{\partial \theta^2 }\right] \quad \in \mathbb{R}^{qd \times qd}.
    \end{equation}
\end{definition}
\begin{proposition}
    For the LTI system \eqref{eq:controlled_dynamics},
    \begin{equation}
        I(\theta) = \frac{T}{\sigma^2}\mathrm{diag}(\Gamma_T, \dots, \Gamma_T),
    \end{equation}
    the number of blocks being $d$. Furthermore, $\Gamma_T$ can be computed as
    \begin{equation}
        \Gamma_T = \frac{1}{T} \sum\limits_{t=0}^{T-1} \bar{z}_t \transp{\bar{z}_t} + \sigma^2 G_t(A).
    \end{equation}
\end{proposition}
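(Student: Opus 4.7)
The plan is to obtain both formulas by reducing the computation to second-order derivatives of the already-simplified quadratic log-likelihood, then taking an expectation and using the decomposition from Proposition \ref{proposition:linear}.

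First I would start from
\[
\ell(\tau,\theta)=-\frac{1}{2\sigma^{2}}\lVert Y-Z\transp{\theta}\rVert_{\mathrm{F}}^{2}
=-\frac{1}{2\sigma^{2}}\sum_{i=1}^{d}\bigl\lVert Y_{i}-Z\transp{\theta_{i}}\bigr\rVert_{2}^{2},
\]
the row decomposition that was already used in the proof of Proposition \ref{proposition:OLS}. This decoupling is the key structural observation: the parameter $\theta\in\mathbb{R}^{d\times q}$ enters the log-likelihood through $d$ independent blocks, one per row $\theta_{i}\in\mathbb{R}^{q}$. Consequently, mixed second derivatives across different rows vanish, and for a fixed row $i$ the Hessian block is simply the Hessian of a standard Gaussian linear-regression log-likelihood, namely $-\tfrac{1}{\sigma^{2}}\transp{Z}Z$. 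Stacking the rows of $\theta$ as a vector in $\mathbb{R}^{qd}$, the Hessian $\partial^{2}\ell/\partial\theta^{2}$ is therefore block-diagonal with $d$ identical blocks equal to $-\tfrac{1}{\sigma^{2}}\transp{Z}Z$.

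Next, I would negate and take expectations under $\theta$. Using $\transp{Z}Z=M_{T-1}=\sum_{t=0}^{T-1}z_{t}\transp{z_{t}}$ and the definition $\Gamma_{T}=\tfrac{1}{T}\mathbb{E}_{\theta}[M_{T-1}]$ from \eqref{eq:gramians} yields directly
\[
I(\theta)=\frac{1}{\sigma^{2}}\mathrm{diag}\bigl(\mathbb{E}_{\theta}[\transp{Z}Z],\dots,\mathbb{E}_{\theta}[\transp{Z}Z]\bigr)=\frac{T}{\sigma^{2}}\mathrm{diag}(\Gamma_{T},\dots,\Gamma_{T}),
\]
which proves the first claim. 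Note that nothing in this step required closed-form knowledge of $\mathbb{E}_{\theta}[\transp{Z}Z]$; it is just unpacking the definition of $\Gamma_{T}$.

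For the explicit expression of $\Gamma_{T}$, I would use Proposition \ref{proposition:linear} to write $z_{t}=\bar{z}_{t}+\tilde{z}_{t}$, where $\bar{z}_{t}$ is the deterministic part (driven by the inputs $u_{0:t-1}$) and $\tilde{z}_{t}$ is the zero-mean noise part. Since $\mathbb{E}_{\theta}[\tilde{z}_{t}]=0$, the cross terms cancel and
\[
\mathbb{E}_{\theta}[z_{t}\transp{z_{t}}]=\bar{z}_{t}\transp{\bar{z}_{t}}+\mathbb{E}_{\theta}[\tilde{z}_{t}\transp{\tilde{z}_{t}}].
\]
Only the state component of $\tilde{z}_{t}$ is nonzero (the input component is deterministic), and from $\tilde{x}_{t}=\sum_{s=0}^{t-1}A^{t-1-s}w_{s}$ together with the independence and isotropy of the $w_{s}$, we get $\mathbb{E}_{\theta}[\tilde{x}_{t}\transp{\tilde{x}_{t}}]=\sigma^{2}\sum_{s=0}^{t-1}A^{s}\transp{(A^{s})}=\sigma^{2}G_{t}(A)$. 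Averaging over $t=0,\dots,T-1$ and using $\Gamma_{T}=\tfrac{1}{T}\sum_{t}\mathbb{E}_{\theta}[z_{t}\transp{z_{t}}]$ gives the second formula.

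The only nontrivial bookkeeping step will be the last one, where one must handle uniformly the two regimes $\theta=(A\ B)$ and $\theta=A$ of the paper: in the former case the covariance of $\tilde{z}_{t}$ is the $q\times q$ matrix $\sigma^{2}G_{t}(A)$ padded with zeros on the $u_{t}$ block, whereas in the latter it is directly $\sigma^{2}G_{t}(A)$. With the paper's convention of identifying this padded matrix with $\sigma^{2}G_{t}(A)$, both cases are covered by the single expression stated in the proposition.
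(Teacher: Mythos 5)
Your proof is correct and follows essentially the same route as the paper's: decoupling the log-likelihood over the rows of $\theta$ so that the Hessian is block-diagonal with $d$ identical blocks $-\transp{Z}Z/\sigma^2$, then applying the decomposition $z_t = \bar{z}_t + \tilde{z}_t$ with the cross terms vanishing in expectation and the noise term giving $\sigma^2 G_t(A)$. The extra bookkeeping you include (the explicit covariance computation for $\tilde{x}_t$ and the distinction between the regimes $\theta=(A\,B)$ and $\theta=A$) only spells out what the paper's shorter proof leaves implicit.
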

\begin{proof}
    The log-likelihood \eqref{eq:log-likelihood} can be separated into a sum over the $\theta_j$ as in \eqref{eq:matrix_ls}.
    The quadratic term in $\theta_j$ is $\Vert Z \theta_j \Vert_2^2 = \transp{\theta_j}\transp{Z}Z\theta_j$ and the other terms are constant or linear. Differentiating twice and taking the expectation gives $\mathbb{E}_\theta[\transp{Z}Z]$, which yields the desired result after dividing by~$-\sigma^2$.
    Following the decomposition \eqref{eq:x_linear},
    $z_t \transp{z_t} =     \bar{z}_t \transp{\bar{z}_t} + \tilde{z}_t \transp{\tilde{z}_t}
        +  \bar{z}_t \transp{\tilde{z}_t} + \tilde{z}_t \transp{\bar{z}_t}.$
    Taking the expectation, we obtain
    $\mathbb{E}[z_t \transp{z_t}] = \bar{z}_t \transp{\bar{z}_t} +  \sigma^2 G_t(A)$. Summing over $t$ yields the result.
    Note that the first term is deterministic and depends on the control whereas the second term depends on the noise and not on the control. Therefore, the expected moment matrix is the sum of a noise term and of a deterministic control term.
\end{proof}
\begin{definition}
    In classical optimal design, the size of the information matrix is measured by some criterion~${\Phi : \mathbb{S}_n^+(\mathbb{R}) \rightarrow \mathbb{R}_+}$, which is a functional of its eigenvalues $\lambda_1, \dots, \lambda_d \geq 0$. The quantity $\Phi(I)$ represents the amount of information brought by the experiment and should be maximized.
\end{definition}
\begin{example}
    Some of the usual criteria are presented in Table~\ref{table:criteria}.
\end{example}
The criteria are required to have properties such as homogeneity, monotonicity and concavity in the sense of the Loewner ordering, which can be interpreted in terms of information theory: monotonicity means that a larger information matrix brings a  greater amount of information, concavity means that information cannot be increased by interpolation between experiments. We refer to \cite{pukelsheim2006optimal} for more details.

\begin{table}[!t]
    \renewcommand{\arraystretch}{1.3}
    \caption{Alphabetical design criteria.}
    \label{table:criteria}
    \centering
    \begin{tabular}{|c|l|}
        \hline
        Optimality   & $\Phi(\lambda_1, \dots, \lambda_d)$

        \\
        \hline
        A-optimality & $-\big(1/{\lambda_1} + \dots + 1/{\lambda_d} \big)$

        \\
        D-optimality & $ \log \lambda_1 + \dots \log \lambda_d
        $
        \\
        E-optimality & $\lambda_1$                                         \\
        \hline
    \end{tabular}
\end{table}
The theory of classical optimal design leads to the definition of the following optimal design informativeness functional.
\begin{definition}[Optimal design functional]
    Let $\Phi$ denote an optimal design criterion. Then the associated cost is defined as
    \begin{equation}
        \label{eq:OD-functional}
        \begin{aligned}
            F_\Phi(\pi;\theta, t) & = -\Phi \left[ \Gamma_t\bigl( \tau(\pi) ; \theta \bigr) \right]
            \\
                                  & = -\Phi \left[\sum\limits_{s=0}^{t-1} \bar{z}_s \transp{\bar{z}_s} + \sigma^2 G_s(A)\right],
        \end{aligned}
    \end{equation}
    where the $\bar{z}_s$ depend on the inputs $u_s$ through \eqref{eq:x_linear}.
\end{definition}
\begin{remark}
    \label{remark:quadratic}
    We note from equation \eqref{eq:x_linear} that $Z$ is affine in~$U$. Hence, $\transp{Z}Z$ is quadratic in~$U$, and maximizing \eqref{eq:OD-functional} efficiently is challenging even with concavity assumptions on~$\Phi$.
\end{remark}

\subsection{Small noise regime}
\label{section:low-noise}
The optimal design functional \eqref{eq:OD-functional} can be related to the MSE in the small noise regime~${\sigma \ll \gamma}$.
\begin{proposition}
    \label{proposition:ols_od}
    The A-optimal design functional~\eqref{eq:OD-functional} is a~$\mathcal{O}(\sigma/\gamma)$ approximation of the~MSE~\eqref{eq:MSE}:
    \begin{equation}
        \label{eq:ols_od}
        {\mathrm{MSE}}(\pi) = \frac{1}{2}F_{\mathrm{A}}(\pi; \theta_\star, T) + \mathcal{O}(\sigma/\gamma).
    \end{equation}
\end{proposition}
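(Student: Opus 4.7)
The plan is to start from the exact OLS error formula from Definition~\ref{definition:ols_mse} and perform a perturbative expansion in the small-noise parameter $\sigma/\gamma$. The key observation is that the coupling between $Z$ and $W$ is the only obstruction to identifying the MSE with the A-optimal design functional, and this coupling is driven by the noise-dependent part of the state, which is of order $\sigma$ while the control-dependent part is of order $\gamma$.

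Concretely, I would proceed as follows. Using Proposition~\ref{proposition:linear}, decompose $Z = \bar{Z} + \tilde{Z}$, where $\bar{Z}$ is the deterministic covariate matrix built from the $\bar{z}_t$ (depending only on the controls, of typical magnitude $\gamma$) and $\tilde{Z}$ is linear in the noise $W$ (of typical magnitude $\sigma$). Then write
\begin{equation*}
    \transp{Z}Z = \transp{\bar Z}\bar Z + \Delta, \qquad \Delta = \transp{\bar Z}\tilde Z + \transp{\tilde Z}\bar Z + \transp{\tilde Z}\tilde Z,
\end{equation*}
with $\Delta$ of relative size $\mathcal{O}(\sigma/\gamma)$ compared to $\transp{\bar Z}\bar Z$. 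A first-order Neumann expansion then gives
\begin{equation*}
    (\transp{Z}Z)^{-1} = (\transp{\bar Z}\bar Z)^{-1} - (\transp{\bar Z}\bar Z)^{-1}\Delta(\transp{\bar Z}\bar Z)^{-1} + \mathcal{O}\!\bigl((\sigma/\gamma)^2\bigr),
\end{equation*}
and substituting this into the trace formula of Definition~\ref{definition:ols_mse} yields, after taking expectation, a leading term plus a residual.

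The leading term is computed by using that $\bar{Z}$ is deterministic, hence independent of $W$, and that $\mathbb{E}[WW^\top]$ is a scalar multiple of $I_T$. This collapses the trace to a scalar multiple of $\mathrm{tr}\bigl((\transp{\bar Z}\bar Z)^{-1}\bigr)$. Finally, from the identity $\Gamma_T = \tfrac{1}{T}\sum_s \bar z_s \transp{\bar z_s} + \sigma^2 G_s(A)$ established just before the proposition, one has $\transp{\bar Z}\bar Z = T\Gamma_T + \mathcal{O}(\sigma^2)$, so $\mathrm{tr}\bigl((\transp{\bar Z}\bar Z)^{-1}\bigr)$ matches the A-optimal functional $F_{\mathrm{A}}(\pi;\theta_\star,T) = \mathrm{tr}(\Gamma_T^{-1})$ up to the normalization implicit in the statement, and I would verify that the residual perturbation lies in $\mathcal{O}(\sigma/\gamma)$ after taking expectations.

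The main obstacle is the residual term coming from the cross-correlations between $Z$ and $W$: naïvely, factors of the form $\mathbb{E}[\tilde Z^\top W W^\top \bar Z]$ involve fourth-order moments of Gaussian variables entangled through the linear map in \eqref{eq:x_linear}. Bounding these uniformly in $\pi \in \Pi_\gamma$ requires using the explicit linearity of $\tilde{z}_t$ in $(w_s)_{s<t}$ together with Isserlis' theorem for Gaussian moments, and then estimating the resulting operator-norm products by $\lVert \bar Z \rVert_{\mathrm{op}}^{-1} \lesssim 1/\gamma$. Once each such term is shown to carry at least one extra factor of $\sigma/\gamma$ relative to the leading one, the $\mathcal{O}(\sigma/\gamma)$ bound follows.
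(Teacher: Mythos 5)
Your proposal is correct and follows essentially the same route as the paper: decompose $Z=\bar Z+\tilde Z$ with $\bar Z=\mathcal{O}(\gamma)$ deterministic and $\tilde Z=\mathcal{O}(\sigma)$, expand the OLS error to first order in $\sigma/\gamma$ (your Neumann expansion of $(\transp{Z}Z)^{-1}$ is the paper's differentiability of $Z\mapsto Z^{+}$ in different clothing), and evaluate the leading term using that $\bar Z$ is independent of $W$ with $\mathbb{E}[W\transp{W}]\propto I$. If anything you are slightly more careful than the paper in noting that $\transp{\bar Z}\bar Z$ differs from $T\Gamma_T$ by the $\sigma^2 G_t(A)$ contribution, which is absorbed into the error term.
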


\begin{proof}
    We introduce the rescaled variables ${\zeta= (1/\gamma) Z}$ and $\omega = ({1}/{\sigma})W$ which are of order~$1$.
    Extending the notations of equation \eqref{eq:x_linear},
    $Z  = \bar{Z} + \tilde{Z}$, where the first term is of order $\gamma$ and the second is of order $\sigma$. Therefore,
    $Z = \bar{Z} + \mathcal{O}(\sigma)$,
    or equivalently
    $\zeta = \bar{\zeta} + \mathcal{O}(\sigma/\gamma)$.
    By Proposition~\ref{proposition:pinverse_differential},~$\zeta^+$ is differentiable at $\bar{\zeta}$ so
    $\zeta^+ =  \bar{\zeta}^+ + \mathcal{O}(\sigma/ \gamma)$.
    Taking the squared norm and using Cauchy-Schwartz inequality, we obtain
    \begin{equation}
        \label{eq:sq-error_expansion}
        \begin{aligned}
            \left\lVert \zeta^+\omega \right\rVert^2 & =\left\lVert \bar{\zeta}^+\omega \right\rVert^2 + \mathcal{O}(\sigma/\gamma).
        \end{aligned}
    \end{equation}
    Furthermore,
    \begin{equation}
        \label{eq:first-term}
        \begin{aligned}
            \mathbb{E} \left[ \left\lVert \bar{\zeta}^+ \omega \right\rVert^2 \right]
             & = \mathbb{E} \left[\mathrm{tr} \left(\bar{\zeta}(\transp{\bar{\zeta}}\bar{\zeta})^{-2} \transp{\bar{\zeta}}\omega \transp{\omega}\right)\right]
            \\
             & = \mathrm{tr}\left[(\transp{\bar{\zeta}}\bar{\zeta})^{-1} \right].
        \end{aligned}
    \end{equation}
    Gathering \eqref{eq:sq-error_expansion} and \eqref{eq:first-term}, we obtain
    \begin{equation}
        \frac{1}{2} \mathbb{E}\left[\left\lVert \zeta^+\omega \right\rVert^2\right] = \frac{1}{2}  \mathrm{tr}\left[(\transp{\bar{\zeta}}\bar{\zeta})^{-1} \right]  + \mathcal{O}(\sigma/\gamma).
    \end{equation}
\end{proof}

\begin{remark}
    In classical least squares regression, the covariates~$Z$ are independent of the noise~$W$. As a consequence, the minimziation of the mean squared estimation error leads to the classical A-optimality criterion. This does not hold in general in our framework because  the signal and the noise are coupled by the dynamics \eqref{eq:controlled_dynamics}.  However, Proposition~\ref{eq:ols_od} shows that this criterion does hold in the small noise regime at first order in $\sigma / \gamma$. Indeed, when~$\sigma \ll \gamma$ the contribution of the noise to the signal is negligible because the deterministic part of the signal is of order $\gamma$.
\end{remark}

\begin{remark}
    \label{remark:scaling}
    From Proposition \ref{proposition:ols_od} and the definition of A-optimality, we see that the MSE approximately scales like~$1/T$ when the number of observations increases. This is confirmed by experiments.
\end{remark}

\section{Online greedy identification}
\label{section:greedy}

\subsection{One-step-ahead objective}
A simple, natural approach for system identification consists in choosing a decision sequentially at each time step. At each time $t$, the control $u_t$ is chosen with energy $\gamma^2$ so as to maximize a one-step-ahead objective. Then, a new observation $x_t$ is collected and the process repeats.
Following~Section~\ref{section:OD}, $u_t$ can be chosen to maximize the value of $F_{\Phi}$ at $t+1$. This corresponds to the choice of functional $F = F_{\Phi}$ and to the one-step schedule $t_{i} = i$.
\par
Upon choosing $u_t$, the policy $\pi_t$ should select $u_t$ so as to maximize the design criterion~$\Phi$ applied on the one-step ahead, $u_t$-dependent information matrix, the past trajectory $x_{0:t}$ being fixed. The one-step-ahead information matrix is~${M_{s-1} + \mathbb{E}_{A_s}[z_{s}\transp{z_{s}}]}$, with $s=t$ when~$B_\star$ is estimated (because then then next $u_t$-dependent covariate is~$z_t$) and~${s = t+1}$ if $B_\star$ is known, because then the next $u_t$-dependent covariate is $x_{t+1}$.
Therefore, one-step ahead planning yields the following optimization problem:
\begin{equation}
    \label{problem:online_deterministic_OD}
    \begin{aligned}
        \underset{u \in \mathbb{R}^m}{\max}
        \qquad & \Phi\left(\bar{M}_t + z(u) \transp{z(u)} \right)
        \\
        \text{such that}
        \qquad &
        \left\lVert u \right\rVert^2 \leq \gamma^2,
    \end{aligned}
\end{equation}
with
\begin{equation}
    \bar{M}_t = \begin{cases}
            M_{t-1} + \sigma^2 G_{t}(A_t) \quad \text{if} \quad \theta = (A, \, B)
            \\ M_{t} + \sigma^2 G_{t+1}(A_t) \quad \text{if} \quad \theta = A,
        \end{cases}
\end{equation}
    and 
\begin{equation}
        z(u) = 
        \begin{cases}
            \begin{pmatrix}
                x_t \\ u
            \end{pmatrix} \quad \text{if} \quad \theta = (A, \, B)
            \\ 
            A_t x_t + B_\star u_t \quad \text{if} \quad \theta = A.
        \end{cases}
\end{equation}
\begin{remark}
    With this greedy policy, the energy constraint imposed for one input ensures that the global power constraint \eqref{eq:power_constraint} is met.
\end{remark}
The corresponding identification process is detailed in Algorithm~\ref{algorithm:greedy_identification}.
We will see in Section~\ref{section:online_optimization} that problem \eqref{problem:online_deterministic_OD} can be solved accurately and at a cheap cost. Moreover, Algorithm~   \ref{algorithm:greedy_identification} offers the advantage of improving the knowledge of~$\theta_{\star}$ at each time step using all the available information on the parameter to plan at each time step. This way, the bias affecting planning due to the uncertainty about $\theta_\star$ is minimized. When planning is performed over larger time sequences, a large bias could impair the identification of the system. 
\begin{algorithm}
    \caption{Greedy system identification}
    \label{algorithm:greedy_identification}
    \begin{algorithmic}
        \State \textbf{inputs} initial guess $\theta_0$, noise variance $\sigma^2$, power $\gamma^2$, time horizon $T$, design criterion $\Phi$
        \State \textbf{output} final estimate $\theta_T$
        \For{$0 \leq t \leq T-1$}
        \State  $u_t \in \underset{\Vert u \Vert^2 \leq \gamma^2}{\mathrm{argmax}} \, \Phi\big( \bar{M}_{t}  +z(u)\transp{z(u)} \big)$

        \State  play $u_t$, observe $x_{t+1}$
        \State $M_{t+1} = M_{t} + x_{t+1}\transp{x_{t+1}}$
        \State $\transp{{\theta}_{t+1}} = M_{t+1}^{-1}\big(M_t {\theta_t} + x_t \transp{y_t}\big)$
        \EndFor
    \end{algorithmic}
\end{algorithm}

\subsection{Solving the one-step optimal design problem }
\label{section:online_optimization}
We show that the one-step ahead planning for online system identification is equivalent to a convex quadratic program which can be solved efficiently.
\par \medskip

\begin{proposition}
    \label{proposition:quadratic_program}
    For D-optimality and A-optimality, there exists a symmetric matrix ${Q \in \mathbb{R}^{m\times m}}$ and $b \in \mathbb{R}^m$   the problem \eqref{problem:online_deterministic_OD} is equivalent to
    \begin{equation}
        \label{problem:quadratic_program}
        \begin{aligned}
            \underset{u \in \mathbb{R}^d}{\min} & \quad \transp{u}Q u -  2\transp{b}u
            \\
            \text{such that}                    & \quad
            \left\lVert u \right\rVert^2 \leq \gamma^2.
        \end{aligned}
    \end{equation}
\end{proposition}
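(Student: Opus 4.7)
The plan is to exploit the rank-one structure $\bar{M}_t + z(u)\transp{z(u)}$ via the matrix determinant lemma for D-optimality and Sherman--Morrison for A-optimality, then to use the fact that $z(u)$ is affine in $u$ in both cases. Writing $z(u) = z_0 + Cu$: when $\theta = (A,B)$, the top $d$ coordinates of $z$ are the fixed state $x_t$ and the bottom $m$ coordinates are $u$, so $z_0$ is $x_t$ stacked over the zero vector and $C$ is the zero matrix stacked over $I_m$; when $\theta = A$, $z_0 = A_t x_t$ and $C = B_\star$. The energy constraint transfers to $\|u\|^2 \leq \gamma^2$ unchanged, so the only work is to massage the objective.

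For D-optimality, the matrix determinant lemma gives $\log \det(\bar{M}_t + z\transp{z}) = \log \det \bar{M}_t + \log(1 + \transp{z}\bar{M}_t^{-1} z)$. Since $\log \det \bar{M}_t$ is constant in $u$ and $\log(1+\cdot)$ is strictly increasing on $\mathbb{R}_+$, maximizing the D-criterion is equivalent to maximizing the scalar $\transp{z(u)}\bar{M}_t^{-1} z(u)$. Substituting $z(u) = z_0 + Cu$ expands this into a quadratic in $u$, and identifying $Q = -\transp{C}\bar{M}_t^{-1} C$ and $b = \transp{C}\bar{M}_t^{-1} z_0$ puts the problem exactly in the form \eqref{problem:quadratic_program}.

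For A-optimality, Sherman--Morrison yields $\mathrm{tr}\bigl((\bar{M}_t + z\transp{z})^{-1}\bigr) = \mathrm{tr}(\bar{M}_t^{-1}) - \frac{\transp{z}\bar{M}_t^{-2} z}{1 + \transp{z}\bar{M}_t^{-1} z}$, so the one-step problem amounts to maximizing a ratio of two quadratics in $u$. This is the main obstacle, since it is not literally a single QP. I expect to close it in one of two ways: either by observing that once $\bar{M}_t$ has been sufficiently excited the scalar $\transp{z}\bar{M}_t^{-1} z$ is small compared to $1$, so the denominator can be replaced by $1$ and the objective collapses to the quadratic $\transp{z}\bar{M}_t^{-2} z$ with $Q = -\transp{C}\bar{M}_t^{-2} C$ and $b = \transp{C}\bar{M}_t^{-2} z_0$; or by a Dinkelbach-style reformulation that recasts the fractional maximization as a one-parameter family of quadratic programs $\min_u \transp{u} Q_\lambda u - 2\transp{b_\lambda} u$ subject to $\|u\|^2 \leq \gamma^2$, each of the required form, with $\lambda$ fixed by a scalar root search. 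In both reductions $Q$ is not in general positive semidefinite, so \eqref{problem:quadratic_program} is a (nonconvex) trust-region subproblem whose efficient solution is treated in the remainder of Section~\ref{section:online_optimization}.
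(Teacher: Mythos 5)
Your D-optimality argument is exactly the paper's: Proposition~\ref{proposition:det_expansion} reduces the log-determinant increment to the scalar $\transp{z(u)}\bar{M}_t^{-1}z(u)$, which is quadratic in $u$ because $z$ is affine in $u$, and your identification $Q=-\transp{C}\bar{M}_t^{-1}C$, $b=\transp{C}\bar{M}_t^{-1}z_0$ with $C=B_\star$, $z_0=A_tx_t$ in the known-$B$ case matches Remark~\ref{remark:known-B}. Your closing observation that $Q$ is not positive semidefinite, so that \eqref{problem:quadratic_program} is a trust-region subproblem, is also consistent with how the paper proceeds (Lagrange multiplier plus scalar root-finding).

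The A-optimality case is where you and the paper diverge, and the divergence is substantive. The paper invokes Corollary~\ref{corollary:trace_formula}, which as printed reads $\mathrm{tr}[(M+x\transp{y})^{-1}]=\mathrm{tr}[M^{-1}]-\transp{y}M^{-1}x/(1+\transp{x}M^{-1}y)$; taking $x=y=z(u)$ turns the A-objective into $1-\mathrm{tr}[\bar{M}_t^{-1}]-1/(1+\transp{z}\bar{M}_t^{-1}z)$, a monotone increasing function of the \emph{same} quadratic as in the D case, so both criteria collapse to one and the same QP. You instead apply the standard Sherman--Morrison trace identity, whose numerator carries $\bar{M}_t^{-2}$, and correctly observe that the resulting objective $\transp{z}\bar{M}_t^{-2}z/(1+\transp{z}\bar{M}_t^{-1}z)$ is a ratio of two \emph{distinct} quadratics, hence not manifestly a single QP. Your form of the identity is the correct one (one can check that Proposition~\ref{proposition:inversion_formula} as stated does not actually invert $M+x\transp{y}$; the genuine trace formula is $\mathrm{tr}[M^{-1}]-\transp{y}M^{-2}x/(1+\transp{y}M^{-1}x)$), so you have in effect located a gap in the paper's own proof of the A-optimality half. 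However, neither of your proposed closures establishes the proposition as stated: replacing the denominator by $1$ is an approximation rather than an equivalence, and the Dinkelbach reformulation yields a one-parameter family of problems of the form \eqref{problem:quadratic_program} (with $Q_\lambda$, $b_\lambda$ updated by a scalar root search) rather than a single fixed pair $(Q,b)$. To make the A-optimality claim exact you must either accept the paper's corollary as written, or settle for the Dinkelbach statement, which is weaker than the proposition but still delivers the advertised computational cheapness.
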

\begin{proof}
    From Proposition \ref{proposition:det_expansion}, we find that
    \begin{equation}
        \begin{aligned}
            \log \det \big(\bar{M_t} + z(u) \transp{z(u)}\big) & = \log \det \bar{M_t}
            \\ &+ \log \big(1 + \transp{z(u)}{\bar{M_t}}^{-1}z(u) \big).
        \end{aligned}
    \end{equation}
    Similarly, from Corollary \ref{corollary:trace_formula} ,
    \begin{equation}
        \begin{aligned}
            -\mathrm{tr} \left[ \left(\bar{M_t} + z(u) \transp{z(u)}\right)^{-1}\right] & = 1-\mathrm{tr} \left[{\bar{M_t}}^{-1}\right]
            \\ &- \frac{1}{1 + \transp{z(u)}{\bar{M_t}}^{-1}z(u)}.
        \end{aligned}
    \end{equation}
    Maximizing these quantities with respect to $u$ amounts to maximizing $\transp{z(u)}{\bar{M_t}}^{-1}z(u)$. The matrix  ${\bar{M_t}}^{-1}$ is symmetric because 
    the~$M_t$ and the~$G_t$ are symmetric, and so are its diagonal submatrices.  Given the affine dependence of $z$ in $u$ and the (possible) block structure of $z$ and~$M_t$,~$\transp{z(u)}{\bar{M_t}}^{-1}z(u)$ is of the form $\transp{u}Q u - 2 \transp{b}u$, up to a constant. We provide an explicit formula for $Q$ and $b$ in the case where $\theta = A$ in Remark~\ref{remark:known-B}
\end{proof}
We now characterize the minimizers of Problem \eqref{problem:quadratic_program}. If a minimizer can be found in the interior of the constraining sphere, then $Q$ is positive semidefinite and the problem can be tackled using unconstrainted optimization. We thus consider the equality constrained problem
\begin{equation}
    \label{problem:equality-constrained}
    \begin{aligned}
        \underset{u \in \mathbb{R}^d}{\min} & \quad \transp{u}Q u -  2\transp{b}u
        \\
        \text{such that}                    & \quad
        \left\lVert u \right\rVert^2 = \gamma^2.
    \end{aligned}
\end{equation} 

\begin{proposition}
    Note $\{\alpha_i \}$ the eigenvalues of~$Q$, and~$u_i$ and~$b_i$ the coordinates of~$u_*$ and~$b$ in a corresponding orthonormal basis. Then a minimizer $u_*$ satisfies the following equations for some nonzero scalar $\mu$:
    \begin{equation}
        u_i = b_i/(\alpha_i + \mu)
        \quad \text{and} \quad
        \sum_i \frac{{b_i}^2}{(\alpha_i + \mu)^2} = \gamma^2
        .
    \end{equation}
\end{proposition}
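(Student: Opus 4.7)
The approach is a direct application of Lagrange multipliers to the equality-constrained program \eqref{problem:equality-constrained}, followed by diagonalization of $Q$ in an eigenbasis. First I would write the Lagrangian
\[
L(u, \mu) = \transp{u} Q u - 2 \transp{b} u + \mu \bigl( \transp{u} u - \gamma^2 \bigr).
\]
Since the constraint $\transp{u}u = \gamma^2$ has gradient $2u \neq 0$ everywhere on the feasible set (as $\gamma > 0$), the LICQ condition is satisfied and any local minimizer $u_\ast$ admits a Lagrange multiplier $\mu \in \mathbb{R}$ such that $\nabla_u L(u_\ast, \mu) = 0$, i.e.
\[
(Q + \mu I)\, u_\ast = b.
\]

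Next I would exploit the symmetry of $Q$ (established in the proof of Proposition~\ref{proposition:quadratic_program}) to diagonalize it in an orthonormal basis of eigenvectors with eigenvalues $\alpha_1, \dots, \alpha_m$. Writing $u_\ast$ and $b$ in this basis with coordinates $u_i$ and $b_i$, the stationarity condition decouples into the scalar equations $(\alpha_i + \mu) u_i = b_i$, which give $u_i = b_i/(\alpha_i + \mu)$ whenever $\alpha_i + \mu \neq 0$. Substituting into the constraint $\sum_i u_i^2 = \gamma^2$ immediately yields the secular equation $\sum_i b_i^2 / (\alpha_i + \mu)^2 = \gamma^2$.

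The only somewhat delicate point is the claim that $\mu$ may be taken nonzero. I would argue as follows: if $\mu = 0$ were a solution, then $Q u_\ast = b$ with $\|u_\ast\|^2 = \gamma^2$, so $u_\ast$ would also be a stationary point of the unconstrained quadratic. By the discussion preceding the proposition, this case is already covered by unconstrained optimization (and corresponds to $Q$ being positive semidefinite with its unconstrained minimizer lying on the sphere), so the remaining interesting regime, where the constraint is genuinely active, forces $\mu \neq 0$. I would present this as a brief remark rather than a lengthy argument. No step should pose a real obstacle; the whole proof is essentially a textbook Lagrangian computation once symmetry of $Q$ is in hand, and the only thing to be careful about is the bookkeeping for indices $i$ such that $\alpha_i + \mu = 0$, which I would note fall outside the range of the parametrization above.
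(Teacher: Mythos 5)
Your proposal is correct and follows essentially the same route as the paper: the Lagrange multiplier condition $(Q+\mu I)u_* = b$, inversion in an eigenbasis of the symmetric $Q$, and substitution into the equality constraint to obtain the secular equation. Your treatment of the nonzero-$\mu$ case and of indices with $\alpha_i + \mu = 0$ is in fact somewhat more careful than the paper's one-line remark that ``$\mu$ can be scaled such that $Q+\mu I_d$ is nonsingular.''
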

\begin{proof}
    By the Lagrange multiplier theorem there exists a nonzero scalar~$\mu$ such that~${Qu_* - b = -\mu \, u_*}$, where $\mu$ can be scaled such that $Q+\mu I_d$ is nonsingular. Inverting the optimal condition and expanding the equality constraint gives the two conditions. 
    \end{proof}
Problem \eqref{problem:quadratic_program} can hence be solved at the cost of a scalar root-finding and an eigenvalue decomposition.
In \cite{hager2001minimizing}, bounds are provided so as to initialize the root-finding method efficiently.

\begin{remark}
    \label{remark:known-B}
    In the case where $B_\star$ (\textit{i.e.} $\theta = A$), $Q$ and $b$ have the following expressions:
        \begin{equation}
                Q = -\transp{B}{\bar{M_t}}^{-1}B, \quad b = \transp{B}\bar{M_t}^{-1}A_t x_t.
        \end{equation}
\end{remark}

\section{Gradient-based identification}
\label{section:performance}
In this section, we propose a gradient-based approach to planning.
In a sequential identification scheme of Algorithm~\ref{algorithm:sequential-identification}, the cost functions \eqref{eq:MSE} and \eqref{eq:OD-functional} can be optimized by projected gradient descent. This builds on the following remark.
\begin{remark}[Differentiability of the functionals]
    The functionals \eqref{eq:MSE} and \eqref{eq:OD-functional} are differentiable functions of the output. Indeed, $X$ is an affine function of the inputs as shown in Proposition~\ref{proposition:linear}, and the controllability of $(A, B)$ guarantees that $\transp{Z}Z$ is positive definite.
    Furthermore, the operations of pseudo-inverse (see Proposition \ref{proposition:pinverse_differential}) and the optimal design criteria of Table~\ref{table:criteria} are differentiable over the set of positive definite matrices.
\end{remark}
The gradients with respect to $U$ can either be derived analytically (see \cite{goodwin1977dynamic}, section~6 for the derivation of an adjoint equation) or automatically in an automatic differentiation framework. We rescale $U$ at each step to ensure the power constraint is met. The $t_i$ are chosen arbitrarily. The computational complexity of the algorithm is linear in~$T$: each gradient step backpropagates through the planning time interval.
\subsection{Gradient-based optimal design}
We propose a gradient-based method to optimize $U$ by performing gradient descent directly on $U$ in functional \eqref{eq:OD-functional}. Note that we optimize the inputs directly in the time domain, whereas other approaches such as \cite{wagenmaker2021taskoptimal} perform optimization in the frequency domain by restricting the control to periodic inputs.

\subsection{Gradient through the oracle MSE}
Given the true parameters $\theta_{\star} = (A_{\star} \, B_{\star})$, the optimal control for the MSE minimizes the~MSE cost \eqref{eq:MSE}, as explained Example~\ref{example:oracle}. However, the dependency between $Z$ and $W$ makes this functional complicated to evaluate and to minimize with respect to the inputs, even when the true parameters $\theta_\star$ are known. We propose a numerical method to minimize \eqref{eq:MSE} using automatic differentiation an Monte-Carlo sampling. Given one realization of the noise and inputs $U$, the gradient of the squared error \eqref{eq:ols_error} can be computed automatically in an automatic differentiation framework. Then, one can sample a batch of $b$ noise matrices~${W_1, \dots, W_b \sim \mathcal{N}(0, \sigma^2 I)}$ and approximate the gradient of~\eqref{eq:MSE} by
\begin{equation}
    \nabla \mathrm{MSE}(U)\simeq \frac{1}{b} \sum\limits_{i=1}^b \nabla_U \mathrm{tr} \left[
        Z (\transp{Z}Z)^{-2}\transp{Z}W_i \transp{W_i}
        \right].
\end{equation}
Although we do not have convergence guarantees due to the lack of structure of the objective function, the gradient descent does converge in practice, to a control that outperforms the adaptive controls.

\begin{algorithm}
    \caption{Planning by projected gradient descent}
    \label{algorithm:gradient_planning}
    \begin{algorithmic}
        \State \textbf{inputs} $A_t$, $\sigma$, $\gamma$, $T$, $\eta$, $H_t$
        \State \textbf{output} control $U \in \mathbb{R}^{(T-t)\times m}$
        \For{$0 \leq j \leq n_{\mathrm{gradient}}$}

        \State  $G(U) = F[X(U)|H_t]$
        \State  $U = U - \eta \nabla G(U)$
        \State  $U = (\gamma\sqrt{T} / \Vert U \Vert_{\mathrm{F}}) \times U $
        \EndFor
    \end{algorithmic}
\end{algorithm}


\section{Performance study}
\label{section:experiments}

\subsection{Complexity analysis}
\begin{definition}[Performance]
    Let $\theta_T$ denote the estimation produced by the learning algorithm at the end of identification. The performance of the policy $\pi$ is measured by the average error over the experiments on the true system:~${\varepsilon = \mathrm{MSE}(\pi)}$.
    We study the performance of our algorithms as a function of the number of observations~$T$ and $C$ the computational cost. We also introduce the computational rate~$c = C/T$.
\end{definition}
Algorithm \ref{algorithm:greedy_identification} and the gradient identification algorithm have linear time complexity. Hence, we define $c_{\mathrm{greedy}}$ and $c_{\mathrm{gradient}}$ for a given number of gradient iterations. In practice, we find that~$c_{\mathrm{greedy}} \ll c_{\mathrm{gradient}}$, where $c_{\mathrm{gradient}}$ is the computational rate needed for the gradient descent to converge.
As pointed out in Remark~\ref{remark:scaling}, the squared error essentially scales like~$1/T$. This is verified experimentally.
Given the previous observations, we postulate that the performance of our algorithms takes the form
\begin{equation}
    \label{eq:performance}
    \varepsilon(C, T) = {\eta(c)}/{T}.
\end{equation}
We build an experimental diagram where we plot the average estimation error for $\theta_\star = A_\star$ as a function of the two types of resource $T$ and $C$ for the gradient algorithm.  Increasing~$C$ allows for more gradient steps. We run trials with random matrices $A_{\star}$ of size $d=4$, with $B=I_d$.  We set $\gamma = 1$, $\sigma = 10^{-2}$, $T\in [60, 220]$. The gradient algorithm optimizes the~A-optimality functional \eqref{eq:OD-functional} with a batch size of $b=100$ and~$\{t_i\} = \{0, 10, T/2, T\}$. The obtained performances are to be compared with those of the greedy algorithm (with the A-optimality cost function), which has a fixed, small computational rate $c$. Our diagrams are plotted on~Fig.~\ref{fig:diagrams-exp}.
\par
Our diagrams show that the greedy algorithm is preferable in a phase of low computational rate: $C < c\times T$, as suggested by \eqref{eq:performance}. The phase separation corresponds to a relatively high number of gradient steps. Indeed, the iso-performance along this line are almost vertical, meaning that the gradient descent has almost converged. Furthermore, the maximum performance gain of the gradient algorithm relatively to the greedy algorithm is of 10\%.
\begin{figure}
    \centering
    \includegraphics[width=.7\textwidth]{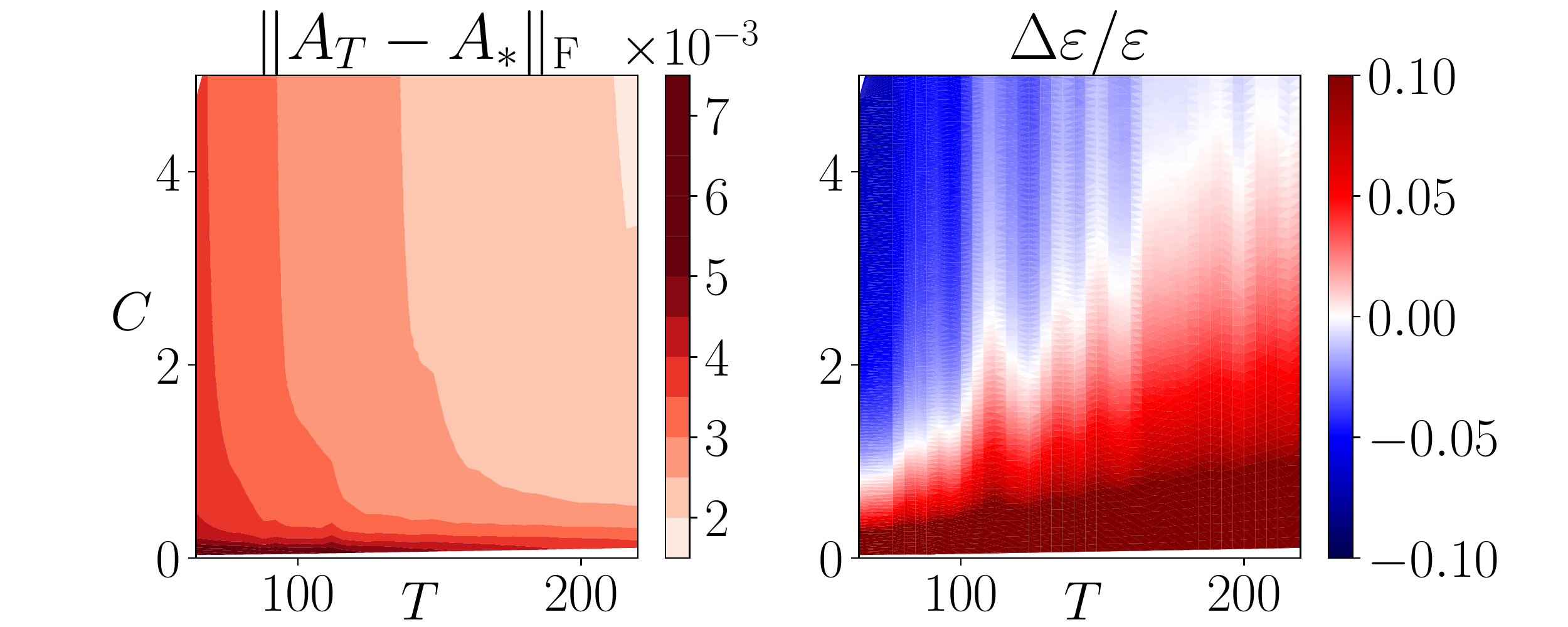}
    \caption{Experimental $(T,C)$ diagram. \textbf{Left} Performance of the gradient algorithm, with varying $T$ and $C$ (varying number of gradient steps). \textbf{Right} Relative performance of the gradient algorithm with respect to the greedy algorithm: negative means that the gradient performs better.}
    \label{fig:diagrams-exp}
\end{figure}

\subsection{Average estimation error}
We now test the performances of our algorithms on random matrices, with the same settings as in the previous experiment. For the gradient algorithm, the minimal number of gradient iterations to reach maximum performance for was found to be $n_{\mathrm{gradient}} = 120$. For each matrix~$A_{\star}$, we also compute an oracle optimal control using Algorithm~\ref{algorithm:gradient_planning} with a batch size of~$b=100$, and run a random input baseline (see~Example~\ref{example:random}), and the TOPLE algorithm of \cite{wagenmaker2021taskoptimal}.
\par
Both the gradient algorithm and the greedy algorithm closely approach the oracle. The former performs slightly better than the latter in average. However, the computational cost of the gradient algorithm is far larger, as Table \ref{table:computational-cost} shows. Indeed, the number of gradient steps to reach convergence in this setting is found to be of order ${n_{\mathrm{gradient}} \simeq 100}$. Note that the number of sub-gradient steps for the TOPLE algorithm is found to be  $n_{\mathrm{TOPLE}} \simeq 1000$, and so~${n_{\mathrm{TOPLE}} \simeq 20 \times  n_{\mathrm{gradient}}}$.
\begin{figure}[h]
    \centering
    \includegraphics[height=7cm]{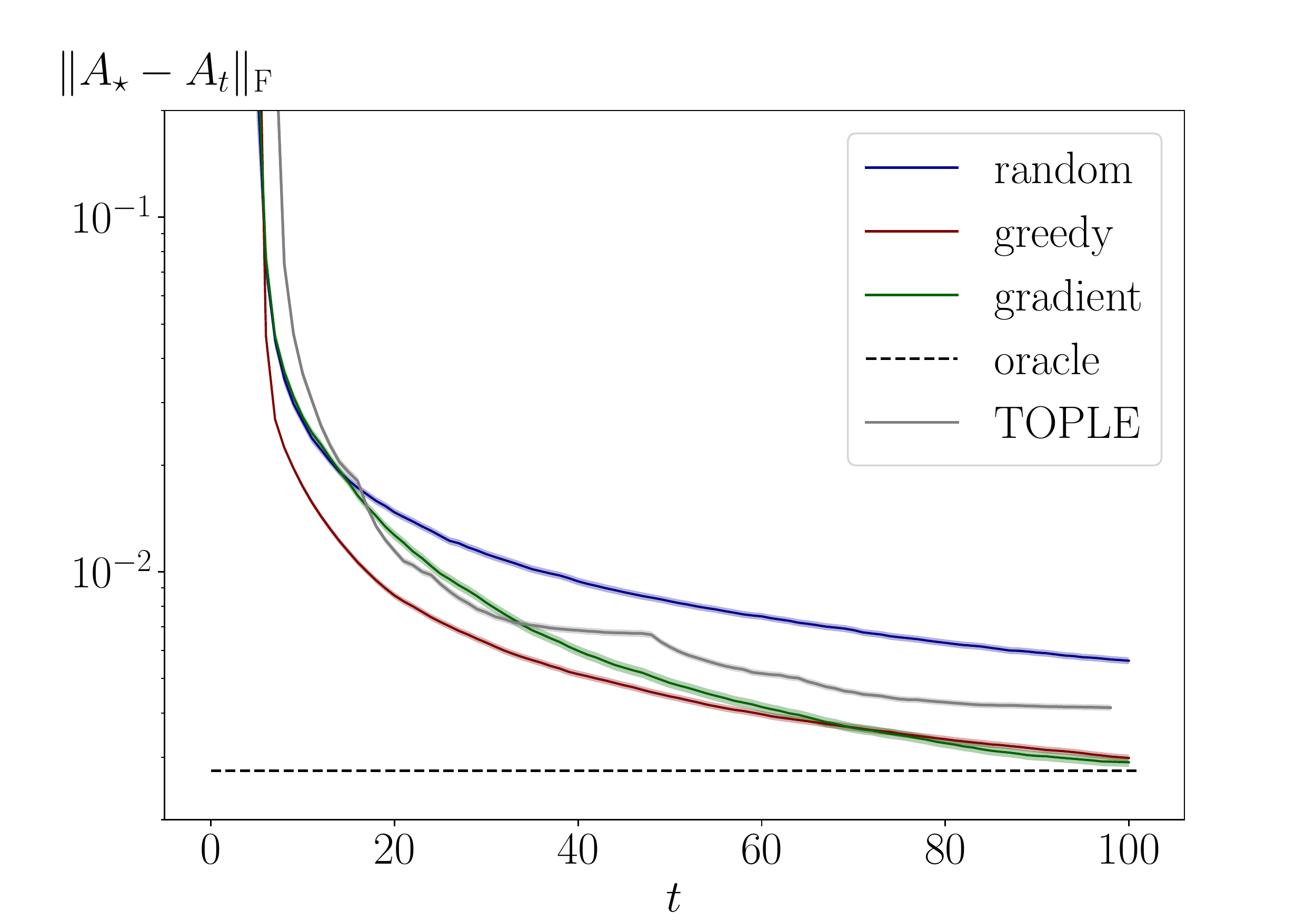}
    \caption{Identification error for random $A_\star$ averaged over 1000 samples. }
    \label{fig:random-A}
\end{figure}

\begin{table}[!t]
    \renewcommand{\arraystretch}{1.3}
    \caption{Average computational rate for the different algorithms.}
    \label{table:computational-cost}
    \centering
    \begin{tabular}{l|llll}
        \bfseries & \bfseries Random
                  & \bfseries TOPLE\cite{Wagenmaker2020}
                  & \bfseries Gradient
                  & \bfseries Greedy
        \\
        \hline
        $c$       & $1$                                  & $n_{\mathrm{TOPLE}}\times 0.02$ & $n_{\mathrm{gradient}}\times 0.5$ & $2.36$
    \end{tabular}
\end{table}

\subsection{Identification of an aircraft system}
We now study a more realistic setting  from the field of aeronautics: we apply system identification to an aircraft system. We use the numerical values issued in a report from the~NASA \cite{gupta1976application}.
The lateral motion of a Lockheed Jet star is described by the slideslip and roll angles and the roll and yaw rates $(\beta, \phi, p, r)\transp{} := x$. The control variables are the aileron and rudder angles $(\delta_{\mathrm{a}}, \delta_{\mathrm{r}} ) := u$.
The linear dynamics for an aircraft flying at 573.7 meters/sec at~6.096 meters are given by the following matrix, obtained after discretization and normalization of the continuous-time system \cite{gupta1976application}:

\begin{equation}
    A_{\star} =
        \begin{pmatrix}
            .955 & -.0113 & 0       & -.0284 \\
            0    & 1      & .0568   & 0      \\
            -.25 & 0      & -.963   & .00496 \\
            .168 & 0      & -.00476 & -.993
        \end{pmatrix},
\end{equation}
\begin{equation}
    B_\star = 0.1 \times
        \begin{pmatrix}
            0    & 0.0116 \\
            0    & 0      \\
            1.62 & .789   \\
            0    & -.87   \\
        \end{pmatrix},
\end{equation}
and $\sigma=1$, $\gamma \simeq 4$ deg.
\begin{table}[!t]
    \renewcommand{\arraystretch}{1.3}
    \caption{Frobenius error for $A_\star$ in the lateral system of the aircraft, $T=150$. Our oracle algorithm reaches an error of $8.0\times 10^{-2}$. The computational time is expressed in an arbitrary unit.}
    \label{table:aircraft}
    \centering
    \begin{tabular}{l|llll}
        \bfseries & \bfseries Random
                  & \bfseries TOPLE \cite{Wagenmaker2020}
                  & \bfseries MSE gradient
                  & \bfseries Greedy
        \\
        \hline
        Error     & $1.1\times10^{-1}$                    & $8.6\times10^{-2}$ & $8.3 \times10^{-2}$ & $8.2\times10^{-2}$
        \\
        Time      & 1                                     & 55.7               & 25                  & 1.13
    \end{tabular}
\end{table}
We apply our algorithms on this LTI system. Our results are summarized in Table \ref{table:aircraft}.
\par
As we can see, the greedy algorithm outperforms the gradient-based algorithms, both in performance and in computational cost. This could be explained by the fact that the signal-to-noise ratio in this system is of order 1, hence the estimation bias in planning is large and it is more effective to plan one-step-ahead than to do planning over large epochs. We obtain similar results for the longitudinal system of a C-8 Buffalo aircraft \cite{gupta1976application}.


\section{Conclusion}

In this work, we explore a setting for linear system identification with hard constraints on the number of interactions with the real system and on the computing resources used for planning and estimation. We introduce a greedy online algorithm requiring minimal computing resources and show empirically that for small values of interactions with the system, it can actually outperform more sophisticated gradient-based methods. Extension of this approach to optimal control for the LQR is an interesting direction of future research.

\section{Matrix calculus}

\begin{proposition}
    \label{proposition:pinverse_differential}
    On a domain where $X$ has linearly independent columns, $X^+$ is a differentiable function of $X$ and
    \begin{equation}
        \ud X^+ = - X^+ \ud X X^+  +  X^+\transp{X^+} \ud \transp{X} ( I - XX^+).
    \end{equation}
\end{proposition}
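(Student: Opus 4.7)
The plan is to exploit the closed-form expression $X^+ = (\transp{X}X)^{-1}\transp{X}$, which is valid on the open set where $X$ has linearly independent columns (so that $\transp{X}X$ is invertible). Since $X^+$ is then a composition of smooth operations (transpose, matrix product, and the inverse of a nonsingular matrix), differentiability is automatic, and only the formula for the differential remains to be derived.

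First I would apply the product rule together with the standard identity $\ud A^{-1} = -A^{-1}(\ud A)A^{-1}$ for $A = \transp{X}X$, obtaining
\begin{equation*}
\ud X^+ = -(\transp{X}X)^{-1}\bigl[(\ud \transp{X})X + \transp{X}\,\ud X\bigr](\transp{X}X)^{-1}\transp{X} + (\transp{X}X)^{-1}\,\ud\transp{X}.
\end{equation*}
Recognising $(\transp{X}X)^{-1}\transp{X} = X^+$ collapses the term carrying $\transp{X}\,\ud X$ into $-X^+(\ud X)X^+$, and converts the term carrying $(\ud\transp{X})X$ into $-(\transp{X}X)^{-1}(\ud\transp{X})XX^+$. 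Grouping the two remaining $\ud\transp{X}$ contributions yields the factor $I - XX^+$, leaving the intermediate form
\begin{equation*}
\ud X^+ = -X^+(\ud X)X^+ + (\transp{X}X)^{-1}(\ud\transp{X})(I - XX^+).
\end{equation*}

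The final step is to rewrite $(\transp{X}X)^{-1}$ as $X^+\transp{X^+}$. This is a one-line verification: since $\transp{X}X$ is symmetric, $\transp{X^+} = X(\transp{X}X)^{-1}$, hence $X^+\transp{X^+} = (\transp{X}X)^{-1}\transp{X}X(\transp{X}X)^{-1} = (\transp{X}X)^{-1}$. Substituting gives exactly the stated formula. I do not expect a real obstacle; the only step that requires any thought is noticing the identity $(\transp{X}X)^{-1} = X^+\transp{X^+}$, which is precisely what casts the answer into the form displayed in the proposition. The full-column-rank hypothesis enters both to justify the closed form for $X^+$ and, implicitly, through $X^+X = I$, which is used each time a factor $(\transp{X}X)^{-1}\transp{X}X$ is simplified to the identity.
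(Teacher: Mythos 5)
Your derivation is correct and complete: on the open set of full-column-rank matrices, $X^+=(\transp{X}X)^{-1}\transp{X}$, the product rule plus $\ud A^{-1}=-A^{-1}(\ud A)A^{-1}$ gives your intermediate form, and the identity $X^+\transp{X^+}=(\transp{X}X)^{-1}$ (which follows from the symmetry of $\transp{X}X$) turns it into the stated formula. The paper does not actually prove this proposition --- it simply cites Golub and Pereyra, whose result is more general in that it covers any domain where $X$ has locally constant rank, not just full column rank. Your argument is the elementary special case adapted exactly to the hypothesis as stated, and it has the advantage of being self-contained; what it does not buy you is the general constant-rank version, where no closed form $(\transp{X}X)^{-1}\transp{X}$ is available and one must instead work with the defining Moore--Penrose identities. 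For the purposes of this paper (where $Z$ is assumed full column rank via controllability), your proof suffices, and the observation that $I-XX^+$ is the orthogonal projector onto the complement of the column space of $X$ makes the structure of the second term transparent.
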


\begin{proof}
    See \cite{golub1973differentiation}.
\end{proof}
\begin{lemma}
    \label{lemma:det}
    Let $A \in \mathbb{R}^{k \times \ell}$ and $B \in \mathbb{R}^{n\times m}$. Then
    \begin{equation}
        \det (I_{k, m} + AB) = \det(I_{n, \ell} + BA).
    \end{equation}
\end{lemma}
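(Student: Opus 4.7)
This is the Weinstein--Aronszajn (Sylvester) determinant identity. For the statement to make sense as written, the inner dimensions must agree so that both $AB$ and $BA$ exist, and the displayed identity matrices must be square; so I read the lemma as asserting $\det(I_k + AB) = \det(I_n + BA)$ under the conventions $\ell = n$ and $m = k$ (the subscripts $I_{k,m}$ and $I_{n,\ell}$ are then just decorative indicators of size, not rectangular identities).

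My plan is the classical two-way Schur complement computation applied to a single $(k+n)\times(k+n)$ block matrix. Concretely, I would introduce
\begin{equation}
M = \begin{pmatrix} I_k & -A \\ B & I_n \end{pmatrix}
\end{equation}
and compute $\det M$ in two different ways using the block determinant formula
$\det\!\begin{pmatrix} P & Q \\ R & S \end{pmatrix} = \det(P)\det(S - R P^{-1} Q)$ (valid when $P$ is invertible), respectively the dual formula with $S$ invertible.

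First, taking the top-left block $I_k$ as the pivot, the Schur complement in the bottom-right is $I_n - B \cdot I_k^{-1} \cdot (-A) = I_n + BA$, giving $\det M = \det(I_n + BA)$. Then, taking the bottom-right block $I_n$ as the pivot, the Schur complement in the top-left is $I_k - (-A) \cdot I_n^{-1} \cdot B = I_k + AB$, giving $\det M = \det(I_k + AB)$. Equating the two expressions yields the claim. As an alternative exposition I could exhibit the explicit block-LU factorizations
\begin{equation}
\begin{pmatrix} I_k & 0 \\ B & I_n \end{pmatrix}\begin{pmatrix} I_k & -A \\ 0 & I_n + BA \end{pmatrix} = M = \begin{pmatrix} I_k & -A \\ 0 & I_n \end{pmatrix}\begin{pmatrix} I_k & 0 \\ B & I_k + AB \end{pmatrix},
\end{equation}
wait --- the lower-right entry of the last factor should be $I_n$ after carrying through the factorization from the top-left, so to be safe I would just rely on the Schur complement argument, which is cleaner and avoids bookkeeping slips.

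There is no real obstacle: the only thing to be careful about is the dimensional interpretation of the $I_{k,m}$ and $I_{n,\ell}$ notation and making sure that the pivot blocks ($I_k$ and $I_n$) are genuinely invertible (they are, trivially), so that both Schur complement formulas apply without qualification. This lemma is then used in the paper to justify the rank-one update identity $\log\det(\bar M_t + z z^\top) = \log\det \bar M_t + \log(1 + z^\top \bar M_t^{-1} z)$ appearing in the proof of Proposition~\ref{proposition:quadratic_program}, which follows by applying the identity with $A = \bar M_t^{-1} z$ and $B = z^\top$ after factoring out $\bar M_t$.
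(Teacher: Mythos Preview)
Your Schur--complement argument is correct and is the standard proof of the Sylvester (Weinstein--Aronszajn) identity; your reading of the dimensions ($\ell=n$, $m=k$, with $I_{k,m}$ and $I_{n,\ell}$ simply denoting $I_k$ and $I_n$) is also the only one under which the statement is well-posed.

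There is nothing to compare against here: the paper does not actually prove Lemma~\ref{lemma:det}. The lemma is stated without a proof environment, and the stray ``See \cite{vrabel2016note}'' block that appears a few lines later is detached from any theorem (it sits between the proof of Proposition~\ref{proposition:det_expansion} and Proposition~\ref{proposition:logdet_slope}); even if it was intended for the lemma, it is a citation rather than an argument. So your write-up supplies a proof where the paper offers none.

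One small note: the second block-LU factorization you sketched is indeed miswritten (the block $I_k+AB$ belongs in the top-left of a lower block-triangular factor, i.e.\ $M=\begin{pmatrix} I_k & -A\\ 0 & I_n\end{pmatrix}\begin{pmatrix} I_k+AB & 0\\ B & I_n\end{pmatrix}$), but since you already flagged the slip and rely only on the Schur-complement computation, this does not affect the validity of your proof.
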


\begin{proposition}
    \label{proposition:det_expansion}
    Let $M \in \mathbb{R}^{d \times d}$ be a nonsingular matrix and $x, y \in \mathbb{R}^d$. Then
    \begin{equation}
        \det (M + x\transp{y}) = \det M \times (1 + \transp{y}M^{-1}x).
    \end{equation}
\end{proposition}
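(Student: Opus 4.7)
The plan is to reduce the identity to the Sylvester-type determinant result stated in Lemma \ref{lemma:det}. First I would factor $M$ out on the left, writing $M + x\transp{y} = M(I_d + M^{-1}x\transp{y})$, and use multiplicativity of the determinant to get $\det(M + x\transp{y}) = \det(M) \cdot \det(I_d + M^{-1}x\transp{y})$. This step is immediate because $M$ is assumed nonsingular.

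Next I would apply Lemma \ref{lemma:det} with the rank-one choice $A = M^{-1}x \in \mathbb{R}^{d \times 1}$ and $B = \transp{y} \in \mathbb{R}^{1 \times d}$. The lemma then gives $\det(I_d + M^{-1}x \transp{y}) = \det(I_1 + \transp{y} M^{-1} x)$, and the right-hand side is just the $1\times 1$ determinant $1 + \transp{y} M^{-1} x$. Combining with the previous display yields the claimed formula.

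There is essentially no real obstacle here: once Lemma \ref{lemma:det} is available, the proof is two lines. The only thing to be careful about is matching the rectangular-identity conventions $I_{k,m}$ and $I_{n,\ell}$ of the lemma to the rank-one setting, with $k=m=d$ on one side and $n=\ell=1$ on the other. An alternative, more self-contained route would compute the determinant of the $(d+1)\times(d+1)$ block matrix with diagonal blocks $M$ and $1$ and off-diagonal blocks $-x$ and $\transp{y}$ using Schur complements in two different orderings: eliminating the lower-left block using $M$ yields $\det(M)\,(1 + \transp{y}M^{-1}x)$, while eliminating the upper-right block using the scalar $1$ yields $\det(M + x\transp{y})$. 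Invoking Lemma \ref{lemma:det} is cleaner and more direct, however, so I would go with that route.
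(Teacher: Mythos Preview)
Your proposal is correct and matches the paper's proof essentially line for line: factor $M+x\transp{y}=M(I_d+M^{-1}x\transp{y})$, use multiplicativity of the determinant, then apply Lemma~\ref{lemma:det} with $A=M^{-1}x$ and $B=\transp{y}$ to reduce to the $1\times 1$ case. The Schur-complement alternative you sketch is also fine but is not what the paper does.
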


\begin{proof}
    \begin{equation}
        M + x\transp{y} = M(I + M^{-1} x \transp{y})
    \end{equation}
    Apply Lemma \ref{lemma:det}:
    \begin{equation}
        \begin{aligned}
            \det(M + x\transp{y}) & = \det M \times  \det (I_d + M^{-1} x \transp{y})
            \\
                                  & = \det M \times \det (I_1 + \transp{y} M^{-1} x)
            \\
                                  & = \det M \times (1+ \transp{y} M^{-1} x).
        \end{aligned}
    \end{equation}
\end{proof}


\begin{proof}
    See \cite{vrabel2016note}.
\end{proof}

\begin{proposition}
    \label{proposition:logdet_slope}
    Let $0 < A \leq B$ be positive definite matrices of~$\mathbb{R}^{d \times d}$, and $x \in \mathbb{R}^d$. Then
    \begin{equation}
        \label{eq:logdet_slopes}
        \log \det (A+ x \transp{x}) - \log \det A \geq
        \log \det (B+ x \transp{x}) - \log \det B.
    \end{equation}
\end{proposition}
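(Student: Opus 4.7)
The plan is to reduce the inequality to a statement about quadratic forms and then invoke the antitonicity of matrix inversion on the positive definite cone.

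First, I would apply Proposition \ref{proposition:det_expansion} with $M = A$ (resp.\ $M = B$) and $y = x$, which gives
\begin{equation}
\log\det(A + x\transp{x}) - \log\det A = \log\bigl(1 + \transp{x}A^{-1}x\bigr),
\end{equation}
and analogously for $B$. The desired inequality \eqref{eq:logdet_slopes} thus becomes
\begin{equation}
\log\bigl(1 + \transp{x}A^{-1}x\bigr) \;\geq\; \log\bigl(1 + \transp{x}B^{-1}x\bigr).
\end{equation}

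Next, since $\log$ is strictly increasing on $(0, +\infty)$ and both arguments are positive (because $A, B$ are positive definite, so $\transp{x}A^{-1}x, \transp{x}B^{-1}x \geq 0$), it suffices to show the quadratic-form inequality
\begin{equation}
\transp{x}A^{-1}x \;\geq\; \transp{x}B^{-1}x.
\end{equation}
Equivalently, we need $B^{-1} \leq A^{-1}$ in the Loewner order, which is exactly the antitonicity of matrix inversion on $\mathbb{S}_d^{++}(\mathbb{R})$. I would prove this standard fact directly: from $A \leq B$, left- and right-multiplying by $A^{-1/2}$ yields $I \leq A^{-1/2} B A^{-1/2}$, hence the symmetric positive definite matrix $C := A^{-1/2} B A^{-1/2}$ satisfies $C \geq I$, so all its eigenvalues are $\geq 1$ and therefore $C^{-1} \leq I$. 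Conjugating back with $A^{1/2}$ gives $A^{1/2} B^{-1} A^{1/2} \leq I$, i.e.\ $B^{-1} \leq A^{-1}$.

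The argument is essentially a two-line reduction: the only non-trivial ingredient is the operator antitonicity of inversion, which is standard. There is no real obstacle here, but if I wanted a more self-contained argument I could instead prove the antitonicity by noting that for any invertible positive definite $A$ and $x \neq 0$, $\transp{x}A^{-1}x = \max_{y \neq 0} \bigl( 2\transp{x}y - \transp{y}Ay \bigr)$, and since $A \leq B$ implies $\transp{y}Ay \leq \transp{y}By$ for all $y$, the maximum over $y$ of the right-hand side can only increase when $A$ replaces $B$, yielding $\transp{x}A^{-1}x \geq \transp{x}B^{-1}x$ directly.
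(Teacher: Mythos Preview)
Your proposal is correct and follows essentially the same route as the paper: apply Proposition~\ref{proposition:det_expansion} to rewrite each side as $\log(1+\transp{x}M^{-1}x)$, then use the antitonicity of inversion $0<A\le B \Rightarrow B^{-1}\le A^{-1}$ to conclude. The only difference is that the paper simply invokes this antitonicity as a known fact, whereas you supply two self-contained proofs of it.
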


\begin{proof}
    By Proposition \ref{proposition:det_expansion},
    \begin{equation}
        \log \det (A+ x \transp{x}) - \log \det A  = \log (1+ \transp{x}A^{-1}x)
    \end{equation}
    Since $0 < A \leq B$, both matrices are nonsingular and $0 < B^{-1} \leq A^{-1}$. Hence,
    \begin{equation}
        \begin{aligned}
            \log (1+ \transp{x}A^{-1}x) & \geq \log(1 +\transp{x}B^{-1}x)
            \\
                                        & = \log \det (B+ x \transp{x}) - \log \det B
        \end{aligned}
    \end{equation}
\end{proof}
Proposition \ref{proposition:det_expansion} admits the following generalization.
\begin{proposition}
    \label{proposition:det_expansion_sum}
    Let $M \in \mathbb{R}^{d \times d}$ be a nonsingular matrix and let~${x_1, \dots, x_n, y_1, \dots, y \in \mathbb{R}^d}$. Then

    \begin{equation}
        \begin{aligned}
            \det \left(M + \sum\limits_{i=1}^n x_i \transp{y_i}\right)
             & =  \det M
            \\
             & + \sum\limits_{i=1}^n \transp{x_i} \mathrm{adj}\left(M + \sum\limits_{j=1}^{i-1} x_j \transp{y_j}\right)y_i
        \end{aligned}
    \end{equation}
\end{proposition}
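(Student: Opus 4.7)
The natural approach is induction on $n$, and the structure of the right-hand side, where the $i$-th summand involves the adjugate of the partial sum $M + \sum_{j<i} x_j \transp{y_j}$, practically hands us the recursion. The plan is to let $M_n = M + \sum_{i=1}^{n-1} x_i \transp{y_i}$ so that the full argument of the determinant becomes $M_n + x_n \transp{y_n}$, peel off the last rank-one update with a generalized version of Proposition \ref{proposition:det_expansion}, and then invoke the inductive hypothesis to expand $\det M_n$.

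For the base case $n=1$, I would rewrite Proposition \ref{proposition:det_expansion} in an adjugate-friendly form. When $M$ is nonsingular, multiplying the identity $\det(M + x\transp{y}) = \det M \cdot (1 + \transp{y} M^{-1} x)$ through and using $\mathrm{adj}(M) = \det(M)\, M^{-1}$ yields
\begin{equation}
\det(M + x\transp{y}) = \det M + \transp{y}\,\mathrm{adj}(M)\, x.
\end{equation}
This is exactly the $n=1$ instance (up to the scalar transpose $\transp{y}\,\mathrm{adj}(M)\,x = \transp{x}\,\transp{\mathrm{adj}(M)}\,y = \transp{x}\,\mathrm{adj}(\transp{M})\,y$, matching the statement's ordering). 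Crucially, since both sides are polynomials in the entries of $M$, $x$, and $y$, and agree on the dense open set where $M$ is invertible, the identity extends to arbitrary $M$. This matters because the intermediate matrices $M + \sum_{j<i} x_j \transp{y_j}$ appearing inside $\mathrm{adj}$ need not be nonsingular, so we cannot afford to require invertibility in the rank-one update.

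The inductive step is then nearly mechanical. Assuming the formula for $n-1$ rank-one perturbations, I apply the adjugate rank-one update above with $M \leftarrow M_n$, $x \leftarrow x_n$, $y \leftarrow y_n$:
\begin{equation}
\det\!\left(M_n + x_n \transp{y_n}\right) = \det M_n + \transp{y_n}\,\mathrm{adj}(M_n)\, x_n.
\end{equation}
Substituting the inductive hypothesis for $\det M_n$ and noting that $M_n$ is precisely the partial sum indexed by $j \leq n-1$, the new term $\transp{y_n}\,\mathrm{adj}(M_n)\,x_n$ slots in as the $i=n$ summand, completing the induction.

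The only nontrivial point, and what I would call out as the main technical obstacle, is the extension from nonsingular to singular $M$: Proposition \ref{proposition:det_expansion} is stated only for invertible matrices, whereas the inductive argument must apply the rank-one update at a matrix $M_n$ that could well be singular. The polynomial-identity argument sketched above is the clean way around this; alternatively, one can perturb $M$ to $M + \varepsilon I$, apply the invertible case, and send $\varepsilon \to 0$, using that both sides are continuous in the matrix entries.
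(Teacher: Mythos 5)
Your proof is correct. Note that the paper does not actually prove this proposition: its ``proof'' is only a pointer to \cite{vrabel2016note}. Your induction --- peeling off the last rank-one term with the adjugate form $\det(M+x\transp{y})=\det M+\transp{y}\,\mathrm{adj}(M)\,x$ of Proposition~\ref{proposition:det_expansion}, and extending that identity to singular matrices by the polynomial-density (or $M+\varepsilon I$) argument --- is the standard route, presumably the one in the cited note, and you correctly isolate the one genuine subtlety, namely that the intermediate matrices $M+\sum_{j<i}x_j\transp{y_j}$ may be singular even though $M$ is not. One small point you gloss over: the scalar transpose of $\transp{y}\,\mathrm{adj}(M)\,x$ is $\transp{x}\,\mathrm{adj}(\transp{M})\,y$, not $\transp{x}\,\mathrm{adj}(M)\,y$, and since $\mathrm{adj}(M)$ need not be symmetric this does not literally match the proposition as printed; what you prove is the correct identity, and the displayed statement appears to have $x_i$ and $y_i$ (or a transpose on the adjugate) swapped.
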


\begin{proof}
    See \cite{vrabel2016note}.
\end{proof}

\begin{proposition}
    \label{proposition:inversion_formula}
    Let $M \in \mathbb{R}^{d \times d}$ be a nonsingular matrix and $x, y \in \mathbb{R}^d$. Then $(M + x\transp{y})$ is nonsingular and
    \begin{equation}
        (M + x\transp{y})^{-1} =  (I_d - \frac{1}{1+ \transp{x}M^{-1}y} x \transp{y})M^{-1}
    \end{equation}
\end{proposition}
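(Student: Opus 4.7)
The plan is to prove this as a direct verification of the Sherman--Morrison identity. There are two ingredients: first, establishing that $M + x\transp{y}$ is nonsingular; second, checking that the proposed right-hand side is indeed its inverse by a one-line algebraic manipulation.

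For nonsingularity, I would invoke Proposition~\ref{proposition:det_expansion}, which gives $\det(M + x\transp{y}) = \det M \cdot (1 + \transp{y} M^{-1} x)$. Since $M$ is nonsingular and the denominator scalar appearing in the proposed inverse is nonzero (implicit in the statement, since otherwise the right-hand side is undefined), we conclude $\det(M+x\transp{y}) \neq 0$.

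For the verification, I would set $\alpha = \transp{y} M^{-1} x$ and compute the left multiplication:
\begin{equation*}
    (M + x\transp{y})\Bigl(I_d - \tfrac{1}{1+\alpha}\, x\transp{y}\Bigr) M^{-1}.
\end{equation*}
Distributing yields $I_d + x\transp{y} M^{-1} - \tfrac{1}{1+\alpha}\bigl(x\transp{y} M^{-1} + x\transp{y} M^{-1} x\transp{y} M^{-1}\bigr)$. The key algebraic fact is that $x\transp{y} M^{-1} x\transp{y} M^{-1} = \alpha\, x\transp{y} M^{-1}$, because the middle factor $\transp{y}M^{-1}x = \alpha$ is a scalar that pulls out. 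Collecting terms gives $I_d + x\transp{y}M^{-1}\bigl(1 - \tfrac{1+\alpha}{1+\alpha}\bigr) = I_d$, as required.

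The only subtle point, and the main thing worth flagging as the potential obstacle, is a minor discrepancy in the statement: the denominator is written as $1 + \transp{x}M^{-1}y$, whereas the scalar that naturally arises from both the determinant identity and the verification above is $1 + \transp{y}M^{-1}x$. These two scalars coincide automatically when $M$ is symmetric, which is the case in every application in this paper (the matrices $\bar M_t$ and their inverses used in Section~\ref{section:online_optimization} are symmetric). I would therefore either restrict the statement to symmetric $M$ or rewrite the denominator in the canonical form $1 + \transp{y}M^{-1}x$ before carrying out the verification; the proof itself is otherwise a routine calculation.
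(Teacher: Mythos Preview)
Your verification contains an algebraic slip that hides a more serious issue: the formula as printed is not correct, and your computation does not actually establish it.

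Carry out the distribution carefully. Writing $N = \bigl(I_d - \tfrac{1}{1+\alpha}\,x\transp{y}\bigr)M^{-1}$ and multiplying on the left by $M + x\transp{y}$ gives
\[
(M + x\transp{y})N
= I_d + x\transp{y}M^{-1}
- \frac{1}{1+\alpha}\,M x\transp{y}M^{-1}
- \frac{\transp{y}x}{1+\alpha}\,x\transp{y}M^{-1}.
\]
The third term is $Mx\transp{y}M^{-1}$, not $x\transp{y}M^{-1}$ as you wrote, and the scalar in the fourth term is $\transp{y}x$, not $\alpha = \transp{y}M^{-1}x$. Neither simplification you use is valid for general $M$, and the expression does not collapse to $I_d$. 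A quick sanity check: with $M=\mathrm{diag}(2,1)$ and $x=y=e_1$, the true inverse of $M+x\transp{y}=\mathrm{diag}(3,1)$ is $\mathrm{diag}(1/3,1)$, while the right-hand side of the stated identity evaluates to $\mathrm{diag}(1/6,1)$.

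So the discrepancy is not only the $\transp{x}M^{-1}y$ versus $\transp{y}M^{-1}x$ swap you flagged; the rank-one correction is also missing a factor of $M^{-1}$. The correct Sherman--Morrison identity reads
\[
(M + x\transp{y})^{-1}
= M^{-1} - \frac{M^{-1}x\,\transp{y}M^{-1}}{1+\transp{y}M^{-1}x}
= \Bigl(I_d - \frac{1}{1+\transp{y}M^{-1}x}\,M^{-1}x\transp{y}\Bigr)M^{-1},
\]
and \emph{this} is what your one-line check would verify if redone. Your expansion in fact matches the product $(M+x\transp{y})\,M^{-1}\bigl(I_d - \tfrac{1}{1+\alpha}x\transp{y}M^{-1}\bigr)$, which corresponds to the corrected formula, not the one printed. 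The paper itself does not supply a proof of this proposition, so there is nothing to compare your approach against; but before proving it you need to repair the statement.
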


\begin{corollary}
    \label{corollary:trace_formula}
    Let $M \in \mathbb{R}^{d \times d}$ be a nonsingular matrix and $x, y \in \mathbb{R}^d$. Then
    \begin{equation}
        \mathrm{tr}\left[(M + x\transp{y})^{-1}\right] = \mathrm{tr}[M^{-1}] - \frac{\transp{y}M^{-1}x }{1+ \transp{x}M^{-1}y}
    \end{equation}
\end{corollary}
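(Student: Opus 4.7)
The plan is to deduce the trace formula directly from Proposition~\ref{proposition:inversion_formula} by applying the trace operator to both sides. Since the inversion identity already gives a closed form for $(M + x\transp{y})^{-1}$, the work reduces to a linearity-and-cyclic-trace computation with no further analytic content.

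First, I would rewrite the right-hand side of Proposition~\ref{proposition:inversion_formula} by distributing $M^{-1}$ through the parenthesis:
\begin{equation*}
    (M + x\transp{y})^{-1} = M^{-1} - \frac{1}{1 + \transp{x}M^{-1}y}\, x \transp{y} M^{-1}.
\end{equation*}
This step is purely algebraic, and the scalar factor $1/(1 + \transp{x}M^{-1}y)$ is well-defined because Proposition~\ref{proposition:inversion_formula} already requires $M + x\transp{y}$ to be nonsingular.

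Next, I would apply the trace operator to both sides and use its linearity to split the sum, pulling the scalar coefficient out of the trace. The only non-trivial manipulation is to simplify $\mathrm{tr}(x \transp{y} M^{-1})$, which I would handle by cyclic invariance of the trace: $\mathrm{tr}(x \transp{y} M^{-1}) = \mathrm{tr}(\transp{y} M^{-1} x) = \transp{y} M^{-1} x$, the last equality holding because the argument is a $1\times 1$ scalar. Substituting this back yields
\begin{equation*}
    \mathrm{tr}\bigl[(M + x\transp{y})^{-1}\bigr] = \mathrm{tr}[M^{-1}] - \frac{\transp{y} M^{-1} x}{1 + \transp{x} M^{-1} y},
\end{equation*}
which is exactly the claim.

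There is no real obstacle here; the corollary is a one-line consequence of Proposition~\ref{proposition:inversion_formula} once one recognizes that the trace of a rank-one matrix $x\transp{y}M^{-1}$ collapses to the scalar $\transp{y}M^{-1}x$. The only point requiring minor care is to keep the asymmetry of the formula consistent with the asymmetric form stated in Proposition~\ref{proposition:inversion_formula}: the denominator involves $\transp{x}M^{-1}y$ (inherited unchanged from the inversion identity), while the numerator, produced by the cyclic rewriting of the trace, is $\transp{y}M^{-1}x$.
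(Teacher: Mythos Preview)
Your proposal is correct and follows exactly the route the paper intends: the statement is labelled a corollary of Proposition~\ref{proposition:inversion_formula} and is given no separate proof there, so the implied argument is precisely the ``take traces and use cyclicity'' computation you carry out. Your remark about the asymmetry between numerator $\transp{y}M^{-1}x$ and denominator $1+\transp{x}M^{-1}y$ is also on point and consistent with how the paper states both results.
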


\bibliographystyle{unsrt}
\bibliography{references}

\end{document}